\documentclass{article}
\usepackage{arxiv}
\usepackage{amsmath,amsthm}
\usepackage[utf8]{inputenc}
\usepackage[T1]{fontenc}
\usepackage{hyperref}
\usepackage{url}
\usepackage{booktabs}
\usepackage{amsfonts}
\usepackage{nicefrac}
\usepackage{array}
\usepackage{booktabs}
\usepackage{fancyvrb}
\usepackage{graphicx}
\usepackage{caption,subcaption}
\usepackage{placeins}
\usepackage{algorithm}
\usepackage{algpseudocode}
\usepackage{textcomp}
\usepackage{rotating}
\usepackage{array}
\usepackage{booktabs}
\usepackage{tabularx}
\usepackage{fancyvrb}
\usepackage{cuted}
\usepackage{capt-of}
\usepackage{longtable}
\usepackage{hyperref} 

\DeclareUnicodeCharacter{2212}{\textminus}

\newtheorem{proposition}{Proposition}

\newcommand{\piold}{\pi_{\theta_{\text{old}}}}
\newcommand{\pinew}{\pi_{\theta}}
\newcommand{\piref}{\pi_{\text{ref}}}
\newcommand{\aOR}{\alpha_{\text{OR}}}   % OR trust-region radius
\newcommand{\aL}{\alpha_{\text{LoRA}}}  % LoRA scaling --- never bare \alpha
\newcommand{\DKL}{D_{\mathrm{KL}}}
\newcommand{\sg}{\operatorname{sg}}
\title{OR Else: A Differentiable Trust Region for Policy Optimization}

\author{
  Chinmay Rane \\
  Quantiphi Inc \\
  Marlborough, MA, 01752 \\
  \texttt{ranechinmay54@gmail.com}
  \And
  Kanishka Tyagi \\
  Self Machines Inc \\
  San Francisco, California, 94402 \\
  \texttt{kanishkaugee@icloud.com}
  \And
  Michael Manry \\
  Department of Electrical Engineering \\
  The University of Texas at Arlington \\
  Arlington, TX, 76010 \\
  \texttt{manry@uta.edu}
}

\begin{document}
\raggedbottom
\setlength{\textfloatsep}{12pt plus 2pt minus 2pt}
\maketitle

% ============================================================
% ============================================================
\begin{abstract}
PPO and the GRPO baseline studied here use clipped surrogate objectives
whose favorable-direction saturation introduces an abrupt change in the
scalar objective's derivative. We ask whether Output Reset (OR), a
smooth one-sided saturation rule, offers a useful alternative for large
language model post-training. PPO-OR and GRPO-OR replace the clipped
policy term with an OR squared-margin loss in rollout-relative token
log-ratio space; the advantage sign determines the update direction, and
a token contributes zero direct OR residual after crossing the favorable
margin. We compare PPO-clip with PPO-OR under generalized advantage estimation
(GAE), and GRPO with GRPO-OR under group-relative advantages, using
\texttt{Llama-3.2-1B-Instruct} on Anthropic \texttt{hh-rlhf} with one
shared reward model and three seeds per method. Under GAE, PPO-OR has a
mean final training-time reward-model score $0.305$ higher than PPO-clip,
with a larger observed across-seed spread. Under group-relative
advantages, GRPO-OR does not have a higher mean score, but shows a
smaller observed spread, a near-zero terminal OR residual, and a
declining overshoot fraction, while the matched GRPO clipped-objective
trace remains variable. Both group-relative methods exhibit
substantially larger rollout-to-current log-ratio displacement than the
GAE methods, and OR does not consistently reduce it. Thus, OR changes optimization behavior in both matched comparisons, but
the observed reward effect differs between them. At $G=2$, the
GRPO-OR diagnostics do not translate into a reward-score gain. Whether
larger groups change this outcome remains open. The reported scores are
training-time reward-model measurements, not held-out human-preference
performance.
\end{abstract}

\noindent\textbf{Code:}
\url{https://github.com/ChicoR-Dr/PPO-GRPO-OR}

\noindent\textbf{Keywords:} RLHF,
LLM post-training, policy optimization, PPO, GRPO, Output Reset,
preference optimization

\newpage

% ============================================================
\section{Introduction}
\label{sec:intro}

Reinforcement learning from human feedback (RLHF) is widely used to
adapt large language models to human preferences, building on
preference-based reinforcement learning and subsequent work on
human-feedback training for language models
~\cite{CHRISTIANO2017,STIENNON2020,OUYANG2022,BAIETAL2022}. Trust Region
Policy Optimization (TRPO) introduced an explicit KL-constrained policy
update, while Proximal Policy Optimization (PPO) replaced that
second-order procedure with a clipped first-order surrogate
~\cite{SCHULMAN2015,SCHULMAN2017}. PPO remains a common optimization
method for LLM post-training and uses the clipped surrogate objective
\begin{equation}
  L^{\text{CLIP}}(\theta)
  =
  \hat{\mathbb{E}}_t\left[
    \min\left(
      r_t(\theta)\hat{A}_t,\;
      \operatorname{clip}\left(
        r_t(\theta),\,1-\epsilon,\,1+\epsilon
      \right)\hat{A}_t
    \right)
  \right].
  \label{eq:ppo_clip}
\end{equation}
Here,
$r_t(\theta)=\pinew(a_t\mid s_t)/\piold(a_t\mid s_t)$ is the
probability ratio and $\hat{A}_t$ is an estimated advantage. For
$\hat{A}_t>0$, the clipped branch removes further incentive to increase
$r_t$ once $r_t>1+\epsilon$; for $\hat{A}_t<0$, it removes further
incentive to decrease $r_t$ once $r_t<1-\epsilon$. The objective remains
sensitive to movement in the opposite, disadvantageous direction.
Clipping therefore suppresses further policy-loss updates after a
sample has moved sufficiently far in the advantage-favored direction,
but it does not impose a hard bound on the optimizer step, parameter
displacement, KL divergence, or cumulative policy drift. The
one-sided zero-gradient region is an intentional feature of PPO, not by
itself an optimization pathology. The distinction examined in this work
is the geometry of the transition into that region: although
$L^{\text{CLIP}}$ is continuous, its derivative with respect to $r_t$
changes abruptly at the active clipping boundary. Group Relative Policy
Optimization (GRPO)~\cite{SHAO2024} replaces the learned value baseline
with advantages computed from groups of responses to the same prompt,
but retains the clipped probability-ratio surrogate and its piecewise
gradient structure.

We introduce PPO-OR and GRPO-OR, which replace the clipped policy term
with an objective derived from Output Reset (OR)~\cite{TYAGI2024}.
Conceptually, both methods retain the advantage-estimation machinery of
their parent algorithms but change how each sampled token is acted upon:
the sign of the advantage specifies whether the token should become more
or less probable, and the direct OR update for that token stops once its
log-probability change crosses a prescribed margin in the desired
direction. PPO-OR uses GAE advantages and a learned value head, whereas
GRPO-OR uses group-relative advantages without a critic.

Adapting OR to policy optimization requires a scalar quantity that
represents how the probability of a sampled token changes relative to
the rollout policy. We use the per-token log-probability ratio
\begin{equation}
\rho_t
=
\log\pinew(a_t\mid s_t)
-
\log\piold(a_t\mid s_t),
\end{equation}
which plays the role of the network output in the original OR
formulation. For a nonzero advantage, OR assigns a target of $+\aOR$
when $\hat{A}_t>0$ and $-\aOR$ when $\hat{A}_t<0$. Once $\rho_t$
crosses the corresponding boundary in the advantage-favored direction,
the target is reset to the current log-ratio, producing zero residual
and zero direct gradient from that sample. In forward-value form, the
per-token objective is equivalent to the squared one-sided margin
\begin{equation}
  \ell_{\mathrm{OR},t}
  =
  \begin{cases}
    \left[
      \max\left(
        0,\,
        \aOR-\operatorname{sign}(\hat{A}_t)\rho_t
      \right)
    \right]^2,
    & \hat{A}_t \neq 0, \\[4pt]
    0,
    & \hat{A}_t = 0.
  \end{cases}
  \label{eq:or_intro}
\end{equation}

The OR-MSE gradient approaches zero continuously at the favorable
saturation boundary and remains zero beyond it, although it is not
globally bounded on the adverse side of the active branch. PPO-OR is
therefore not merely a differentiable smoothing of PPO clipping. It
simultaneously replaces the probability ratio $r_t$ with the log-ratio
$\rho_t$, replaces the clipped surrogate with a squared one-sided
margin, and uses the sign rather than the magnitude of $\hat{A}_t$ to
determine the update direction. On the active branch, gradient descent
on the OR-MSE term increases the selected token's log-probability when
$\hat{A}_t>0$ and decreases it when $\hat{A}_t<0$.
Proposition~\ref{prop:sign_consistency} formalizes this sample-level
directional property, while Proposition~\ref{prop:diff} establishes
continuity of the first derivative at the favorable boundary. These
properties do not imply that the aggregate update is identical to the
full policy gradient or that optimizing the OR-MSE objective guarantees
monotonic improvement in expected return.

We evaluate the loss replacement in two advantage-estimation settings.
PPO-clip and PPO-OR use generalized advantage estimation with a learned
value head, whereas GRPO and the proposed GRPO-OR use group-relative
advantages with $G=2$. The study therefore contains two matched
within-family substitutions: PPO-clip versus PPO-OR under GAE, and GRPO
versus GRPO-OR under group-relative advantages. Cross-family comparisons
also reflect differences in advantage estimation, value-model usage,
response length, and response sampling, and are not interpreted as a
controlled factorial decomposition. All four methods are evaluated on
Anthropic \texttt{hh-rlhf}~\cite{BAIETAL2022} using a
Llama-3.2-1B-Instruct policy, a single shared frozen reward model, and
three seeds. Because the experiments use a lightweight reward model, a
small group size, and a resource-constrained training configuration,
the results characterize the evaluated setting rather than establish
general superiority over PPO or GRPO.

Under GAE, the observed mean endpoint reward-model score increases from
$+0.195$ for PPO-clip to $+0.500$ for PPO-OR, an absolute difference of
$+0.305$; the corresponding across-seed standard deviations are
$0.110$ and $0.158$. Under $G=2$ group-relative advantages, the
observed mean changes from $+0.488$ for GRPO to $+0.457$ for GRPO-OR,
while the across-seed standard deviation decreases from $0.080$ to
$0.027$. Because the reward model is trained from pairwise differences
and its numerical zero is not calibrated, these scores are interpreted
comparatively rather than as positive or negative success thresholds.
The recorded policy-objective trajectories also differ substantially:
the clipped GRPO objective exhibits large excursions, whereas the
GRPO-OR MSE remains close to zero on its own scale. These raw values are
not directly comparable because the objectives have different
definitions, signs, and numerical scales, so we treat them only as
objective-specific training diagnostics. Policy drift provides a shared
diagnostic across all four methods. Measured by the batch-averaged
absolute log-ratio relative to the rollout policy, both group-relative
methods exhibit substantially greater drift than the GAE-based methods,
and GRPO-OR does not consistently reduce this drift relative to GRPO.
This finding separates local gradient saturation from global policy
control: OR suppresses the direct policy-loss contribution of a sample
after it crosses its favorable boundary, but does not constrain
cumulative movement caused by shared parameters, auxiliary losses,
later minibatches, or changing advantage signs.

Our contributions are:
\begin{enumerate}
\item We introduce \textbf{PPO-OR} and \textbf{GRPO-OR}, two LLM
post-training variants that replace the clipped policy
surrogate with an Output-Reset MSE objective in log-ratio
space while retaining, respectively, GAE and group-relative
advantage estimation.

\item We characterize the OR policy objective as a continuously
differentiable, one-sided squared-margin loss. We establish
that its sample-level update changes a selected token's
log-probability in the direction indicated by the sign of its
estimated advantage and vanishes continuously at the favorable
saturation boundary.

\item We conduct two matched, three-seed comparisons of clipping and
OR: PPO-clip versus PPO-OR under GAE, and GRPO versus GRPO-OR
under $G=2$ group-relative advantages. In the evaluated setting,
PPO-OR attains a higher observed mean endpoint reward-model
score than PPO-clip, whereas GRPO-OR attains a slightly lower
observed mean than GRPO but a smaller across-seed standard
deviation.

\item We show that replacing clipping with OR does not resolve the
large policy drift observed under $G=2$ group-relative
advantages. This negative result identifies cumulative policy
drift, rather than boundary smoothness alone, as the principal
unresolved problem exposed by the study
(Sec.~\ref{sec:dynamics}).
\end{enumerate}

% ============================================================
\section{Background and Related Work}
\label{sec:background}

\subsection{Policy Optimization in RLHF}
\label{sec:background_ppo}

Reinforcement learning from human feedback (RLHF) adapts a language-model
policy using a scalar reward learned from human preference comparisons.
A common large-scale pipeline consists of supervised fine-tuning,
reward-model training on preference pairs, and online policy
optimization against the learned reward
~\cite{CHRISTIANO2017,STIENNON2020,OUYANG2022,BAIETAL2022}. The
policy-optimization stage is commonly expressed as a KL-regularized
objective,
\begin{equation}
  \begin{aligned}
    J(\theta)
    ={}&
    \mathbb{E}_{\substack{x\sim\mathcal{D}\\
                          y\sim\pinew(\cdot\mid x)}}
    \left[r_\phi(x,y)\right] \\
    &-\beta\,
    \mathbb{E}_{x\sim\mathcal{D}}
    \left[
      \DKL\left(
        \pinew(\cdot\mid x)
        \,\middle\|\,
        \piref(\cdot\mid x)
      \right)
    \right],
  \end{aligned}
  \label{eq:rlhf_obj}
\end{equation}
where $r_\phi(x,y)$ is the reward-model score and $\beta$ controls
regularization toward the frozen reference policy $\piref$. For a response
$y_i=(y_{i,1},\ldots,y_{i,T_i})$ sampled from the rollout policy $\piold$, the
sampled rollout-to-reference log-ratio at token $t$ is
\begin{equation}
  \begin{aligned}
    k^{\mathrm{roll}}_{i,t}
    ={}&
    \log\piold\left(
      y_{i,t}\mid x_i,y_{i,<t}
    \right) \\
    &-
    \log\piref\left(
      y_{i,t}\mid x_i,y_{i,<t}
    \right).
  \end{aligned}
  \label{eq:sampled_kl}
\end{equation}
A standard terminal-reward construction assigns the reward-model score to the
last generated token and applies the sampled reference penalty at each response
position,
\begin{equation}
  r'_{i,t}
  =
  r_\phi(x_i,y_i)\,\mathbf{1}[t=T_i]
  -\beta k^{\mathrm{roll}}_{i,t}.
  \label{eq:shaped_reward}
\end{equation}
The exact placement of this reference term is an implementation choice; the
GAE-based methods in this work use the shaped-reward form in
Eq.~\eqref{eq:shaped_reward}, whereas the group-relative methods apply the
sampled reference term directly in the optimization objective as described in
Sec.~\ref{sec:ppo_or_loss}.

At a more fundamental level, policy-gradient methods optimize expected
return from sampled trajectories. REINFORCE uses the score-function
estimator to weight action log-probability gradients by sampled returns
~\cite{WILLIAMS1992}, while the policy-gradient theorem provides the
corresponding expected-gradient foundation for parameterized policies
~\cite{SUTTON1999}. Advantage baselines and learned value functions are
then used to reduce estimator variance without changing the intended
policy-gradient direction.

Trust Region Policy Optimization (TRPO)~\cite{SCHULMAN2015} controls policy
movement through an explicit KL-constrained update, implemented using an
approximate second-order procedure. Proximal Policy Optimization
(PPO)~\cite{SCHULMAN2017} replaces that constrained update with a clipped
surrogate that supports multiple minibatch epochs using first-order
optimization. In RLHF, PPO is typically paired with a learned value function
and Generalized Advantage Estimation (GAE)~\cite{SCHULMAN2016} to obtain
token-level advantages $\hat{A}_{i,t}$. The frozen reference policy $\piref$
and rollout policy $\piold$ serve different purposes: $\piref$ anchors the
model to the initial instruction-tuned policy, while $\piold$ generates the
current batch and defines the local importance ratio
$r_t(\theta)=\pinew(a_t\mid s_t)/\piold(a_t\mid s_t)$ in the policy objective.

Empirical analyses have shown that observed PPO and TRPO behavior depends
substantially on implementation details and on how the surrogate,
advantage estimates, and value function behave in practice
~\cite{ENGSTROM2020,ILYAS2020}. Clipping also does not by itself impose
the formal KL-constrained trust region used by TRPO
~\cite{WANG2020}. In addition, PPO likelihood ratios and advantage
estimates can induce heavy-tailed policy-gradient behavior
~\cite{GARG2021}. More recent work has analyzed stationary-point
convergence of PPO-Clip under explicit assumptions~\cite{JIN2024} and
developed alternative first-order proximal objectives with stronger
control of probability-ratio movement~\cite{XIE2025}. These findings
motivate studying the geometry of the policy objective without treating
clipping as either a complete trust-region mechanism or an inherently
defective design.

Several alignment methods simplify or avoid this PPO configuration. Direct
Preference Optimization (DPO)~\cite{RAFAILOV2023} trains directly on offline
preference pairs without fitting an explicit reward model or sampling from the
policy during fine-tuning. ReMax~\cite{LI2024} and
RLOO~\cite{AHMADIAN2024} instead retain online REINFORCE-style optimization
while replacing PPO's learned critic with trajectory-level variance-reduction
baselines. These methods alter the data regime, baseline construction, or
policy-optimization procedure. PPO-OR remains an online policy-optimization
method and specifically replaces the PPO-style clipped policy term with a
$C^1$ squared one-sided margin in token-level log-ratio space; it does not
introduce a formal trust-region constraint.

\subsection{Group-Relative Advantage Estimation}
\label{sec:background_grpo}

Group Relative Policy Optimization (GRPO) was introduced in
DeepSeekMath~\cite{SHAO2024}. It removes the learned value function and
estimates response quality from multiple completions sampled for the
same prompt. Given $G$ responses with reward-model scores
$\{r_j\}_{j=1}^{G}$, the group-relative advantage for response $i$ is
\begin{equation}
  \hat{A}^{\mathrm{GR}}_i
  =
  \frac{
    r_i-\operatorname{mean}\left(\{r_j\}_{j=1}^{G}\right)
  }{
    \max\left(
      \operatorname{std}\left(\{r_j\}_{j=1}^{G}\right),
      \delta
    \right)
  },
  \label{eq:grpo_adv}
\end{equation}
where $\delta>0$ prevents division by a zero or near-zero group standard
deviation. A positive advantage indicates that a response scores above the
group mean, while a negative advantage indicates that it scores below the
mean. The response-level value is assigned to all valid generated tokens of
that response. This construction removes critic training and storage, but the
informativeness and variability of the estimate depend on group size, reward
diversity, and the convention used to compute the group standard deviation.
DeepSeek-R1 later applied group-relative reinforcement learning at larger
scale for reasoning post-training~\cite{DEEPSEEK2025}.

Standard GRPO combines the group-relative estimator with a PPO-style clipped
probability-ratio objective, so the advantage estimator and policy loss remain
separable design choices. Subsequent methods modify other parts of this
pipeline. DAPO~\cite{BYTEDANCE2025} uses asymmetric clipping together with
dynamic sampling, token-level loss aggregation, and overlength reward shaping
to address entropy collapse, ineffective groups, and long-response training
instability. Dr.~GRPO~\cite{DRGRPO} identifies an optimization bias that can
favor increasing response length and proposes a corrected objective that
improves token efficiency. In contrast, GRPO-OR retains group-relative
advantages but replaces the clipped surrogate with the same OR policy loss
used by PPO-OR. The experiments therefore compare the clipped and OR losses
within the group-relative family rather than treating GRPO-OR as a new
advantage estimator.

\subsection{Output Reset}
\label{sec:background_or}

Output Reset (OR)~\cite{TYAGI2024} was introduced as a target-adjustment
procedure for classifier training with mean squared error. For a training
pattern $p$, let $y_p(i_c)$ denote the pre-activation output for the correct
class and $y_p(i_d)$ an output for an incorrect class. OR assigns
\begin{equation}
  \begin{aligned}
    t_p(i_c)
    &=
    \begin{cases}
      1, & y_p(i_c)\leq 1, \\
      y_p(i_c), & y_p(i_c)>1,
    \end{cases} \\
    t_p(i_d)
    &=
    \begin{cases}
      0, & y_p(i_d)\geq 0, \\
      y_p(i_d), & y_p(i_d)<0.
    \end{cases}
  \end{aligned}
  \label{eq:or_class}
\end{equation}
Before an output crosses its desired boundary, its target remains fixed at
$1$ for the correct class or $0$ for an incorrect class. After crossing the
boundary in the desired direction, OR resets the target to the current output,
which makes the corresponding MSE residual and direct gradient contribution
zero during that evaluation.

The present work transfers this target-reset construction from classifier
outputs to token-level policy optimization. The classifier output becomes the
log-probability change relative to the rollout policy, while the correct versus
incorrect class distinction is replaced by the sign of the estimated
advantage. This mapping yields a squared one-sided margin with continuous first
derivative, as developed in Sec.~\ref{sec:method}. Unlike TRPO, it does not
constrain KL divergence or parameter displacement; unlike PPO and GRPO, it
does not use the clipped minimum surrogate; and unlike DPO, it remains an
online reward-driven policy-optimization objective.

% ============================================================

\section{PPO-OR}
\label{sec:method}

PPO-OR replaces the clipped policy-loss term while retaining the remaining components of the corresponding policy-optimization pipeline. For each sampled token, the method measures the change in log-probability relative to the rollout policy and uses the sign of the estimated advantage to determine whether that token should become more or less probable. When this change crosses a prescribed margin in the advantage-favored direction, Output Reset assigns zero residual to the token. We first establish the correspondence between the original Output Reset formulation and token-level policy optimization, then define the PPO-OR and GRPO-OR objectives, and finally characterize the analytical properties of the resulting scalar loss.

\subsection{From Output Reset to Policy Optimization}
\label{sec:mapping}

Adapting Output Reset from classification to policy optimization requires a scalar output and a desired direction for that output. For a sampled token $a_t$ in state $s_t$, we use the per-token log-probability ratio
\begin{equation}
  \rho_t = \log\pinew(a_t\mid s_t) - \log\piold(a_t\mid s_t),
  \label{eq:log_ratio}
\end{equation}
which measures the change in the selected token's log-probability relative to the rollout policy. We further define the advantage-sign variable
\begin{equation}
  \sigma_t = \operatorname{sign}(\hat{A}_t) \in \{-1,0,+1\}.
  \label{eq:advantage_sign}
\end{equation}
A positive estimated advantage gives $\sigma_t=+1$, indicating that the sampled token should become more probable, whereas a negative estimated advantage gives $\sigma_t=-1$, indicating that it should become less probable. When $\hat{A}_t=0$, the OR policy-loss term supplies no directional update for that token. Table~\ref{tab:or_mapping} summarizes the resulting correspondence with the original classification formulation.

\begin{table*}[t]
\centering
\caption{Conceptual correspondence between Output Reset in classification and PPO-OR in token-level policy optimization. The parameter $\aOR>0$ is a one-sided margin in log-ratio space; it is not a hard bound on parameter movement, policy displacement, probability ratios, or KL divergence.}
\label{tab:or_mapping}
\small
\setlength{\tabcolsep}{5pt}
\renewcommand{\arraystretch}{1.12}
\begin{tabular}{@{}p{0.22\textwidth}p{0.28\textwidth}p{0.42\textwidth}@{}}
\toprule
\textbf{OR classification} & \textbf{PPO-OR} & \textbf{Role} \\
\midrule
Pre-activation output $y_p(i)$ & Log-ratio $\rho_t$ & Change in the selected token's log-probability relative to $\piold$ \\
Correct-class output & $\hat{A}_t>0$ & Favor an increase in the selected token's probability \\
Incorrect-class output & $\hat{A}_t<0$ & Favor a decrease in the selected token's probability \\
Fixed class target & Signed target margin $\sigma_t\aOR$ & Desired movement in the advantage-favored direction \\
Output crosses its target & $\sigma_t\rho_t>\aOR$ & Reset the target to $\rho_t$, producing zero residual \\
No class-specific signal & $\hat{A}_t=0$ & Assign zero contribution from the OR policy-loss term \\
\bottomrule
\end{tabular}
\end{table*}

The margin $\aOR$ is analogous to PPO's clipping parameter only in the limited sense that both determine where favorable-direction saturation begins. PPO's ratio-space interval $[1-\epsilon,1+\epsilon]$ corresponds to
\begin{equation}
  \left[\log(1-\epsilon),\,\log(1+\epsilon)\right]
  \label{eq:ppo_log_bounds}
\end{equation}
in log-ratio space. The positive boundary has magnitude $\log(1+\epsilon)$, whereas the negative boundary has magnitude $-\log(1-\epsilon)$; a symmetric OR margin therefore cannot match both PPO clipping boundaries exactly with a single value of $\aOR$. For example, when $\epsilon=0.2$, the PPO boundaries are approximately $[-0.223,+0.182]$, whereas $\aOR=0.2$ gives symmetric OR margins at $[-0.2,+0.2]$.

\subsection{PPO-OR and GRPO-OR Objectives}
\label{sec:ppo_or_loss}

Using $\sigma_t=\operatorname{sign}(\hat{A}_t)$, the OR-adjusted target is
\begin{equation}
  \tau_t =
  \begin{cases}
    \sigma_t\aOR, & \sigma_t\neq 0 \text{ and } \sigma_t\rho_t\leq\aOR, \\
    \rho_t, & \text{otherwise}.
  \end{cases}
  \label{eq:or_target}
\end{equation}
When the signed log-ratio has not crossed the favorable margin, the target remains fixed at $\sigma_t\aOR$; once $\sigma_t\rho_t>\aOR$, the target follows the current log-ratio and the residual becomes zero. The same zero-residual branch is used when $\hat{A}_t=0$. Let $\mathcal{M}$ denote the set of valid generated-token positions in a minibatch, excluding prompt and padding positions, and let $N_{\mathcal{M}}=\lvert\mathcal{M}\rvert$. The OR policy loss is
\begin{equation}
  L_{\mathrm{OR}}(\theta) =
  \frac{1}{N_{\mathcal{M}}}
  \sum_{(b,t)\in\mathcal{M}}
  \left(\rho_{b,t}-\sg[\tau_{b,t}]\right)^2,
  \label{eq:l_or}
\end{equation}
where $\sg[\cdot]$ denotes stop-gradient. Averaging over valid generated tokens prevents prompt and padding positions from affecting the normalization. The forward value of the corresponding per-token loss is equivalently
\begin{equation}
  \ell_{\mathrm{OR},t} =
  \begin{cases}
    \left[\max\left(0,\,\aOR-\sigma_t\rho_t\right)\right]^2, & \sigma_t\neq 0, \\
    0, & \sigma_t=0.
  \end{cases}
  \label{eq:or_margin_loss}
\end{equation}
This form makes the one-sided saturation geometry explicit: the loss and its derivative approach zero continuously at the favorable boundary and remain zero beyond it. The derivative is not globally bounded on the adverse side of the active branch; for example, when $\sigma_t=+1$, the active branch contains every $\rho_t<\aOR$, and the derivative magnitude grows without bound as $\rho_t\rightarrow-\infty$. PPO-OR differs from PPO clipping in more than the smoothness of its saturation boundary. On its active branch, the PPO clipped surrogate weights a sample using the magnitude of its estimated advantage, whereas PPO-OR uses only $\sigma_t$ to determine the target direction; its scalar gradient coefficient is set by the distance between $\rho_t$ and the OR margin rather than by $\lvert\hat{A}_t\rvert$. Because a nonzero advantage enters the loss only through its sign, PPO-OR is invariant to positive rescaling:
\begin{equation}
  \ell_{\mathrm{OR},t}(\rho_t,c\hat{A}_t)
  = \ell_{\mathrm{OR},t}(\rho_t,\hat{A}_t),
  \qquad c>0,\;\hat{A}_t\neq 0.
  \label{eq:or_scaling_invariance}
\end{equation}
This property does not imply robustness to arbitrary advantage noise, because a perturbation that changes the advantage sign reverses the target direction. The sign-only construction is therefore a deliberate design choice, not a claim that advantage magnitudes are generally uninformative or that sign-only weighting is universally preferable. For PPO-OR, the signs $\sigma_{b,t}$ are obtained from token-level GAE advantages and a learned value function. The reference-policy contribution is incorporated into the shaped rewards before GAE is computed. For a generated token, define the rollout-time sampled reference log-ratio
\begin{equation}
  k^{\mathrm{roll}}_{b,t}
  = \log\piold(a_{b,t}\mid s_{b,t})
  - \log\piref(a_{b,t}\mid s_{b,t}),
  \label{eq:rollout_reference_ratio}
\end{equation}
and the shaped reward
\begin{equation}
  r'_{b,t}
  = r_{\mathrm{RM},b}\,\mathbf{1}[t=T_b]
  - \beta k^{\mathrm{roll}}_{b,t},
  \label{eq:ppo_or_shaped_reward}
\end{equation}
where $r_{\mathrm{RM},b}$ is the response-level reward-model score and $T_b$ is the final valid generated position. The resulting training loss is
\begin{equation}
  L_{\mathrm{PPO\text{-}OR}}(\theta,\phi)
  = L_{\mathrm{OR}}(\theta)
  + c_1 L_{\mathrm{value}}(\phi)
  - c_2 H_{\mathcal{M}}[\pinew],
  \label{eq:l_total}
\end{equation}
where $L_{\mathrm{value}}$ is the unclipped critic mean-squared error against the GAE return targets, $H_{\mathcal{M}}[\pinew]$ is the entropy averaged over valid generated positions, and $\phi$ denotes the value-function parameters. No additional direct reference-policy term appears in Eq.~\eqref{eq:l_total}, because the penalty has already contributed to the rewards from which the returns and GAE advantages were constructed.

GRPO-OR uses the same OR policy loss but replaces the token-level GAE advantage with the response-level group-relative advantage from Eq.~\eqref{eq:grpo_adv}. For response $i$, the sign
\begin{equation}
  \sigma_i^{\mathrm{GR}}
  = \operatorname{sign}\left(\hat{A}^{\mathrm{GR}}_i\right)
  \label{eq:grpo_advantage_sign}
\end{equation}
is assigned to every valid generated token belonging to that response; groups with exactly zero reward variance are excluded from the policy update. This changes the source and granularity of the advantage signal while leaving the OR target construction unchanged. In the group-relative implementation, reference-policy regularization is applied directly to the optimization loss through the sampled current-to-reference log-ratio
\begin{equation}
  k^{\mathrm{cur}}_{b,t}(\theta)
  = \log\pinew(a_{b,t}\mid s_{b,t})
  - \log\piref(a_{b,t}\mid s_{b,t})
  \label{eq:current_reference_ratio}
\end{equation}
and its masked mean
\begin{equation}
  L_{\mathrm{ref}}(\theta)
  = \frac{1}{N_{\mathcal{M}}}
  \sum_{(b,t)\in\mathcal{M}} k^{\mathrm{cur}}_{b,t}(\theta).
  \label{eq:reference_penalty}
\end{equation}
This quantity is a sampled-token reference log-ratio penalty rather than the full categorical KL over the vocabulary at every state. The GRPO-OR training loss is
\begin{equation}
  L_{\mathrm{GRPO\text{-}OR}}(\theta)
  = L_{\mathrm{OR}}(\theta)
  + \beta L_{\mathrm{ref}}(\theta)
  - c_2 H_{\mathcal{M}}[\pinew].
  \label{eq:grpo_or_total}
\end{equation}
GRPO-OR contains no value-loss term because it does not train a critic, and the direct reference-policy contribution is applied once rather than also being included in the reward-model scores used to construct the group-relative advantages.

\subsection{Analytical Properties}
\label{sec:theory}

The following propositions characterize the scalar OR policy loss. They do not establish a hard trust region, a monotonic expected-return guarantee, or a bound on the aggregate neural-network update.

\begin{proposition}[One-sided saturation and smoothness]
\label{prop:diff}
For a fixed nonzero advantage sign $\sigma_t$, the per-token loss in Eq.~\eqref{eq:or_margin_loss} is positive when $\sigma_t\rho_t<\aOR$ and zero when $\sigma_t\rho_t\geq\aOR$. Its derivative is
\begin{equation}
  \frac{\partial\ell_{\mathrm{OR},t}}{\partial\rho_t} =
  \begin{cases}
    -2\sigma_t\left(\aOR-\sigma_t\rho_t\right), & \sigma_t\rho_t<\aOR, \\
    0, & \sigma_t\rho_t\geq\aOR.
  \end{cases}
  \label{eq:or_derivative}
\end{equation}
The derivative approaches zero continuously at the favorable boundary and remains zero beyond it. The scalar loss is therefore $C^1$ but not $C^2$ at $\sigma_t\rho_t=\aOR$. On the strict overshoot branch, $\tau_t=\rho_t$, so the corresponding token contributes zero residual and zero direct OR policy-loss gradient during that evaluation.
\end{proposition}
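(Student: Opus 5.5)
The plan is to reduce everything to a single scalar function of one variable. Fix $\sigma_t\in\{-1,+1\}$ and write $u=\sigma_t\rho_t$, so that Eq.~\eqref{eq:or_margin_loss} becomes $\ell_{\mathrm{OR},t}=g(u)$ with $g(u)=[\max(0,\aOR-u)]^2$. The sign claims are immediate from this form. When $u<\aOR$, the inner quantity $\aOR-u$ is strictly positive, so its square is positive. When $u\geq\aOR$, the maximum is $0$.

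For the derivative, I would split into three regions in $u$.

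\textbf{Open region $u<\aOR$.} Here $g(u)=(\aOR-u)^2$ is a polynomial, so $g'(u)=-2(\aOR-u)$.

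\textbf{Open region $u>\aOR$.} Here $g\equiv0$, so $g'(u)=0$.

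\textbf{Boundary point $u=\aOR$.} This is the only nontrivial step. I would compute the one-sided difference quotients directly. For $h<0$, the quotient is $(g(\aOR+h)-g(\aOR))/h=h^2/h=h\to0$. For $h>0$, the quotient is identically $0$. Both one-sided limits agree, so $g'(\aOR)=0$, which matches the $u\geq\aOR$ branch of Eq.~\eqref{eq:or_derivative}.

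Applying the chain rule with $du/d\rho_t=\sigma_t$ then gives $\partial\ell/\partial\rho_t=-2\sigma_t(\aOR-\sigma_t\rho_t)$ on the active branch and $0$ otherwise, exactly as in Eq.~\eqref{eq:or_derivative}.

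Continuity of the derivative follows because $-2(\aOR-u)\to0$ as $u\uparrow\aOR$, which matches the value $0$ at and beyond the boundary. So $g\in C^1$, and hence $\ell$ is $C^1$ in $\rho_t$. To show it is not $C^2$, I would note that $g''=2$ on $u<\aOR$ and $g''=0$ on $u>\aOR$. Thus the one-sided second-derivative limits differ at $u=\aOR$. Equivalently, the quotient $(g'(\aOR+h)-g'(\aOR))/h$ equals $2$ for $h<0$ and $0$ for $h>0$, so $g''(\aOR)$ does not exist.

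For the final claim I would appeal directly to Eq.~\eqref{eq:or_target}. When $\sigma_t\rho_t>\aOR$, the target is $\tau_t=\rho_t$, so the residual $\rho_t-\sg[\tau_t]$ in Eq.~\eqref{eq:l_or} is zero in value. Its gradient is $2(\rho_t-\sg[\tau_t])\nabla\rho_t=0$, since the stop-gradient keeps $\tau_t$ out of the differentiation and the prefactor vanishes.

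I would also remark on the boundary point $\sigma_t\rho_t=\aOR$ itself. There Eq.~\eqref{eq:or_target} uses the fixed target $\sigma_t\aOR=\rho_t$, so the residual is again zero. This is consistent with the forward form of the loss, and it is why the statement restricts the target-reset claim to the strict overshoot branch.

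The main obstacle, though a mild one, is the boundary argument. One must use the difference quotient rather than just evaluate the branch formulas there, and must keep track of the fact that the stop-gradient form and the max form agree in value and gradient at the boundary.
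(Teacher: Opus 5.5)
Your proposal is correct and follows essentially the same route as the paper. Both arguments differentiate the active squared branch, note that the saturated branch is identically zero, match the two at $\sigma_t\rho_t=\aOR$, read off the second-derivative jump from $2$ to $0$, and invoke $\tau_t=\rho_t$ on strict overshoot; your explicit one-sided difference quotients at the boundary (for both $g'$ and $g''$) make rigorous the step the paper covers in one line by noting that the active-branch derivative vanishes there.
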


\begin{proof}
For $\sigma_t\neq0$, Eq.~\eqref{eq:or_margin_loss} reduces to $\ell_{\mathrm{OR},t}=[\max(0,\aOR-\sigma_t\rho_t)]^2$. When $\sigma_t\rho_t<\aOR$, differentiating the active squared term gives the first branch of Eq.~\eqref{eq:or_derivative}; when $\sigma_t\rho_t>\aOR$, the loss is identically zero and hence has zero derivative. At $\sigma_t\rho_t=\aOR$, the active-branch derivative also equals zero, so the first derivative is continuous while the second derivative changes from $2$ on the active branch to $0$ on the saturated branch. For strict overshoot, Eq.~\eqref{eq:or_target} assigns $\tau_t=\rho_t$, yielding zero residual and zero direct gradient from that token-level term.
\end{proof}

\begin{proposition}[Sample-level update direction]
\label{prop:sign_consistency}
On the active branch, the negative gradient of an individual PPO-OR policy-loss term is a positive scalar multiple of $\sigma_t\nabla_\theta\log\pinew(a_t\mid s_t)$. Consequently, the isolated steepest-descent direction favors increasing the selected token's log-probability when $\hat{A}_t>0$ and decreasing it when $\hat{A}_t<0$.
\end{proposition}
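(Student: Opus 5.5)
The argument is a chain-rule computation layered on Proposition~\ref{prop:diff}. I fix a token with $\sigma_t\neq0$ on the active branch $\sigma_t\rho_t<\aOR$, so that $\ell_{\mathrm{OR},t}=(\aOR-\sigma_t\rho_t)^2$ near the current parameters. Because $\piold$ is the frozen rollout policy, $\log\piold(a_t\mid s_t)$ does not depend on $\theta$. From Eq.~\eqref{eq:log_ratio} I therefore get $\nabla_\theta\rho_t=\nabla_\theta\log\pinew(a_t\mid s_t)$.

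Applying the chain rule and using the first branch of Eq.~\eqref{eq:or_derivative} gives
\[
-\nabla_\theta\ell_{\mathrm{OR},t}
= 2\sigma_t\left(\aOR-\sigma_t\rho_t\right)\nabla_\theta\log\pinew(a_t\mid s_t).
\]
The scalar $c_t=2(\aOR-\sigma_t\rho_t)$ is strictly positive precisely because we are on the active branch, which proves the first claim. The same conclusion holds for the stop-gradient form in Eq.~\eqref{eq:l_or}. There $\tau_t=\sigma_t\aOR$ is a constant on this branch, so $\nabla_\theta(\rho_t-\sigma_t\aOR)^2=2(\rho_t-\sigma_t\aOR)\nabla_\theta\rho_t$. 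Since $\sigma_t^2=1$, this equals $-2\sigma_t(\aOR-\sigma_t\rho_t)\nabla_\theta\rho_t$. The positive factor $1/N_{\mathcal{M}}$ from averaging only rescales $c_t$.

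For the directional consequence, I take an infinitesimal step $\theta\mapsto\theta+\eta d$ with $d=-\nabla_\theta\ell_{\mathrm{OR},t}$ and $\eta>0$. The first-order change in $\log\pinew(a_t\mid s_t)$ is
\[
\eta\,c_t\,\sigma_t\,\|\nabla_\theta\log\pinew(a_t\mid s_t)\|^2 .
\]
This has the sign of $\sigma_t=\operatorname{sign}(\hat{A}_t)$ whenever the score gradient is nonzero, and vanishes otherwise.

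The main obstacle is scoping the claim correctly rather than any hard computation. The statement concerns the isolated per-token term at first order only. It says nothing about the summed minibatch update, about the entropy or value terms, or about finite step sizes. It is also vacuous when the score gradient is zero, and I will state that degenerate case explicitly.
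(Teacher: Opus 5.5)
Your proof is correct and follows essentially the same route as the paper: because $\piold$ is frozen, $\nabla_\theta\rho_t=\nabla_\theta\log\pinew(a_t\mid s_t)$, the chain rule gives the factor $2(\aOR-\sigma_t\rho_t)>0$ on the strict active branch, and the directional derivative $c_t\sigma_t\|\nabla_\theta\log\pinew(a_t\mid s_t)\|_2^2$ fixes the sign. Your check that the stop-gradient form in Eq.~\eqref{eq:l_or} gives the same gradient, with $1/N_{\mathcal{M}}$ only rescaling the factor, is a small addition the paper leaves implicit.
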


\begin{proof}
Because the rollout policy is fixed during the optimization epochs, its log-probability contributes no parameter gradient. On the active branch,
\begin{equation}
  -\nabla_\theta\ell_{\mathrm{OR},t}
  = 2\sigma_t\left(\aOR-\sigma_t\rho_t\right)
  \nabla_\theta\log\pinew(a_t\mid s_t).
  \label{eq:or_negative_gradient}
\end{equation}
The scalar factor $2(\aOR-\sigma_t\rho_t)$ is positive. Equivalently, with $d_t=-\nabla_\theta\ell_{\mathrm{OR},t}$, the directional derivative of the selected token's log-probability is
\begin{equation}
  \begin{aligned}
    \nabla_\theta\log\pinew(a_t\mid s_t)^{\top}d_t
    &= 2\sigma_t\left(\aOR-\sigma_t\rho_t\right) \\
    &\quad\times
    \left\|\nabla_\theta\log\pinew(a_t\mid s_t)\right\|_2^2.
  \end{aligned}
\end{equation}
Except at stationary points where the log-probability gradient is zero, its sign is determined by $\sigma_t$, which proves the stated sample-level direction.
\end{proof}

Proposition~\ref{prop:diff} characterizes one-sided saturation of the scalar loss, and Proposition~\ref{prop:sign_consistency} characterizes the isolated steepest-descent direction of one token-level term. Neither describes the exact AdamW update used in the experiments, which also depends on optimizer state, weight decay, other tokens, and the value, entropy, and reference-policy components of the minibatch objective. Because PPO-OR discards $\lvert\hat{A}_t\rvert$ and reweights samples according to their distance from the OR margin, its aggregate minibatch gradient is not generally identical or collinear with the full magnitude-weighted policy gradient. The results above therefore provide neither a monotonic expected-return guarantee nor a bound on parameter displacement, probability-ratio movement, or KL divergence.

%===================================================

\section{Experiments}
\label{sec:experiments}

\subsection{Experimental Setup}
\label{sec:setup}

All experiments were conducted on a single NVIDIA GeForce RTX
4060~Ti GPU with 16.7~GB of reported available VRAM. We used
PyTorch~2.12 with bfloat16 precision for the trainable policy and 4-bit
NormalFloat quantization through BitsAndBytes~\cite{DETTMERS2023} for
the frozen reference model and reward-model backbone. Policy adaptation
used PEFT~\cite{MANGRULKAR2022} and LoRA~\cite{HU2022}. Each method
was evaluated with seeds 42, 43, and 44 on the same machine and with the
same frozen reward model. Sharing one scorer removes reward-model
variation across policy-optimization runs, but the four configurations
do not form a fully controlled factorial comparison because the GAE and
group-relative families differ in advantage estimation, critic usage,
sampling procedure, and response length. We therefore interpret the
study as two matched within-family comparisons: PPO-clip versus PPO-OR
under GAE, and GRPO versus GRPO-OR under group-relative advantages.

The trainable policy was initialized from
\texttt{meta-llama/Llama-3.2-1B-Instruct}; we did not train a new
supervised fine-tuning checkpoint. A frozen copy of the same checkpoint
served as the reference policy $\piref$, while $\piold$ denotes the
rollout-policy snapshot associated with the current rollout batch.
Anthropic \texttt{hh-rlhf}~\cite{BAIETAL2022} was used for both reward-
model training and policy optimization. Preference pairs were used for
the former, whereas the human turn alone was used as the generation
prompt for the latter. Reusing one dataset source for both phases is a
deliberate simplification and may allow overlap between reward-model
training prompts and policy-optimization prompts; this limitation is
discussed in Sec.~\ref{sec:limitations}.

\subsubsection{Reward Model Training}
\label{sec:rm_training}

The reward model consists of a frozen, 4-bit quantized
Llama-3.2-1B backbone and a trainable one-hidden-layer MLP head,

\begin{equation}
\operatorname{Linear}(2048,512)
\rightarrow \operatorname{ReLU}
\rightarrow \operatorname{Dropout}(0.1)
\rightarrow \operatorname{Linear}(512,1)
\label{eq:rm_head}
\end{equation}

The head contains approximately one million trainable parameters, and
only these parameters receive gradients. For each prompt--response
sequence, the hidden state at the final non-padding position is passed
through the MLP to produce the scalar score $r_\phi(x,y)$.

For a preferred response $y^+$ and a rejected response $y^-$ to the
same prompt $x$, the reward head is trained with the Bradley--Terry
loss ~\cite{BRADLEY1952}
\begin{equation}
  \mathcal{L}_{\mathrm{BT}}
  = -\log\sigma\left(
      r_\phi(x,y^+) - r_\phi(x,y^-)
    \right).
  \label{eq:bt_loss}
\end{equation}
The loss equals $\log 2\approx0.693$ when the two responses receive
equal scores and approaches zero as the preferred response receives an
increasingly larger score. Because the objective depends only on score
differences, the numerical zero of the learned reward scale is not
identified.

We trained the reward head on 8,000 preference pairs for three epochs
using a batch size of 8, AdamW with learning rate $2\times10^{-4}$, and
cosine annealing. Prompt--response sequences were truncated or padded
to a maximum length of 512 tokens. Pairwise preference accuracy was
computed as
\begin{equation}
  \operatorname{Acc}_{\mathrm{pair}}
  = \frac{1}{N_{\mathrm{eval}}}
    \sum_{i=1}^{N_{\mathrm{eval}}}
    \mathbf{1}\left[
      r_\phi(x_i,y_i^+) > r_\phi(x_i,y_i^-)
    \right].
  \label{eq:rm_pairwise_accuracy}
\end{equation}
The resulting checkpoint achieved 60--61\% pairwise accuracy (in-sample) and was
then frozen and reused for every policy-optimization run.

\begin{algorithm}[t]
\caption{Reward-model training on Anthropic \texttt{hh-rlhf}}
\label{alg:rm_training}
\begin{algorithmic}[1]
\Require Preference set
$\mathcal{D}=\{(x_i,y_i^+,y_i^-)\}_{i=1}^{8000}$,
frozen backbone $f_{\mathrm{LM}}$, reward head $h_\psi$
\For{epoch $e=1,\dots,3$}
  \For{minibatch $\mathcal{B}\subset\mathcal{D}$,
       $|\mathcal{B}|=8$}
    \State Tokenize $(x,y^+)$ and $(x,y^-)$ to maximum length 512
    \State $\mathbf{h}^+\gets
      f_{\mathrm{LM}}(x,y^+)[\text{last non-padding position}]$
    \State $\mathbf{h}^-\gets
      f_{\mathrm{LM}}(x,y^-)[\text{last non-padding position}]$
    \State $r^+\gets h_\psi(\mathbf{h}^+)$ and
           $r^-\gets h_\psi(\mathbf{h}^-)$
    \State $\mathcal{L}_{\mathrm{BT}}\gets
      -|\mathcal{B}|^{-1}
      \sum_{(x,y^+,y^-)\in\mathcal{B}}\log\sigma(r^+-r^-)$
    \State Update $\psi$ using AdamW
  \EndFor
\EndFor
\State \Return frozen reward model
$r_\phi\equiv h_\psi\circ f_{\mathrm{LM}}$
\end{algorithmic}
\end{algorithm}

\subsubsection{Policy Optimization}
\label{sec:rl_training}

All four methods were trained for 500 rollout steps using four prompts
per rollout batch, four optimization epochs per batch, and learning
rate $10^{-5}$. Generation used temperature $0.9$ and top-$p$ $0.9$ ~\cite{HOLTZMAN2020}.
The reference coefficient was $\beta=0.05$, and the entropy coefficient
was $c_2=0.01$. The GAE-based methods additionally used $\gamma=1.0$,
$\lambda=0.95$, and value-loss coefficient $c_1=0.1$. Optimization used
AdamW with a cosine schedule and a warmup period equal to 10\% of the
total rollout steps. Policy weights were updated through LoRA adapters
with rank $r=16$ and scaling parameter $\aL=32$, applied to
\texttt{q\_proj} and \texttt{v\_proj}; this yielded approximately four
million trainable policy parameters out of 1.2 billion total
parameters.

Table~\ref{tab:rl_configurations} summarizes the method-specific
settings. PPO-clip and PPO-OR use a learned value head
$\operatorname{Linear}(2048,1)$ initialized from
$\mathcal{N}(0,0.01)$ and an unclipped mean-squared value objective.
Their sampled reference-policy term is incorporated into the shaped
rewards before GAE, as defined in Sec.~\ref{sec:ppo_or_loss}; no second
direct reference penalty is added to their optimization loss. PPO-clip
uses $\epsilon=0.2$, whereas PPO-OR uses $\aOR=0.2$. These settings use
the same nominal numerical value but not identical boundaries: PPO
clipping maps to
$[\log(0.8),\log(1.2)]\approx[-0.223,+0.182]$ in log-ratio space,
whereas PPO-OR uses symmetric margins at $[-0.2,+0.2]$.

\begin{table*}[t]
\centering
\caption{Policy-optimization configurations. The matched comparisons
replace only the policy-loss term within each advantage family. The
GAE and group-relative families additionally differ in critic usage,
sampling procedure, and maximum response length.}
\label{tab:rl_configurations}
\small
\setlength{\tabcolsep}{3.5pt}
\renewcommand{\arraystretch}{1.12}
\begin{tabular}{@{}p{0.12\textwidth}p{0.23\textwidth}p{0.18\textwidth}p{0.30\textwidth}p{0.09\textwidth}@{}}
\toprule
\textbf{Method} &
\textbf{Advantage and critic} &
\textbf{Policy term} &
\textbf{Reference regularization} &
\textbf{Response cap} \\
\midrule
PPO-clip &
Token-level GAE with learned value head &
Clipped surrogate, Eq.~\eqref{eq:ppo_clip} &
Sampled reference term incorporated into shaped rewards before GAE &
128 tokens \\
PPO-OR &
Token-level GAE with learned value head &
OR loss, Eq.~\eqref{eq:l_or} &
Sampled reference term incorporated into shaped rewards before GAE &
128 tokens \\
GRPO &
Group-relative response advantage with $G=2$; no critic &
Clipped surrogate, Eq.~\eqref{eq:ppo_clip} &
Sampled current-to-reference penalty added directly to the loss &
64 tokens \\
GRPO-OR &
Group-relative response advantage with $G=2$; no critic &
OR loss, Eq.~\eqref{eq:l_or} &
Sampled current-to-reference penalty added directly to the loss &
64 tokens \\
\bottomrule
\end{tabular}
\end{table*}

For GRPO and GRPO-OR, two responses were generated for each prompt.
The implementation computes a group-relative advantage from the two
reward-model scores, clamps the computed group standard deviation below
at $10^{-4}$, and excludes groups with exactly zero reward variance
from the policy update. For every non-tied $G=2$ group, the two
responses therefore receive opposite advantage signs, although their
normalized magnitudes depend on the standard-deviation convention used
by the installed PyTorch version. This distinction affects the
magnitude-weighted GRPO clipped surrogate, whereas GRPO-OR uses only
the signs to construct its targets. The group-relative methods compute
advantages from the reward-model scores and add the sampled
current-to-reference penalty directly to the optimization loss, as
defined in Sec.~\ref{sec:ppo_or_loss}.

The group size $G=2$ was the largest setting that fit within the
available GPU memory under the implemented pipeline. During GRPO and
GRPO-OR training, the reward-model backbone remained on the CPU during
generation and policy updates and was moved to the GPU only for reward
scoring. This offloading procedure and all other within-family settings
were held fixed between GRPO and GRPO-OR. The smaller group size should
be treated as a hardware-constrained experimental setting rather than
as equivalent to the larger groups used in the original GRPO
experiments~\cite{DEEPSEEK2025}.

\begin{algorithm}[t]
\caption{Policy optimization for the GAE and group-relative variants}
\label{alg:rl_training}
\begin{algorithmic}[1]
\Require Frozen reward model $r_\phi$, frozen reference policy
$\piref$, trainable policy $\pinew$, prompt set $\mathcal{P}$,
method $m$
\State Initialize $\pinew$ from
\texttt{Llama-3.2-1B-Instruct}
\State Initialize $\piref$ as a frozen copy of the same checkpoint
\State Attach LoRA with $r=16$ and $\aL=32$ to
\texttt{q\_proj} and \texttt{v\_proj}
\For{rollout step $k=1,\dots,500$}
  \State Sample prompts $\{x_i\}_{i=1}^{B}$ with $B=4$
  \State Set $\piold$ to the rollout-policy snapshot for this batch
  \If{$m\in\{\text{PPO-clip},\text{PPO-OR}\}$}
    \State Generate one response per prompt with maximum length 128
    \State Record rollout-policy and reference-policy log-probabilities
    \State Record value predictions and terminal reward-model scores
    \State Construct reference-shaped rewards as in
           Sec.~\ref{sec:ppo_or_loss}
    \State Compute GAE advantages and return targets with
           $\gamma=1.0$ and $\lambda=0.95$
  \Else
    \State Generate $G=2$ responses per prompt with maximum length 64
    \State Record rollout-policy and reference-policy log-probabilities
    \State Compute response-level reward-model scores
    \State Compute group-relative advantages using Eq.~\eqref{eq:grpo_adv}
    \State Exclude groups with exactly zero reward variance
  \EndIf
  \For{optimization epoch $e=1,\dots,4$}
    \State Evaluate current-policy log-probabilities on valid response tokens
    \If{$m\in\{\text{PPO-OR},\text{GRPO-OR}\}$}
      \State $\mathcal{L}_{\mathrm{policy}}\gets L_{\mathrm{OR}}$
             using Eq.~\eqref{eq:l_or}
    \Else
      \State $\mathcal{L}_{\mathrm{policy}}\gets L_{\mathrm{clip}}$
             using Eq.~\eqref{eq:ppo_clip}
    \EndIf
    \If{$m\in\{\text{PPO-clip},\text{PPO-OR}\}$}
      \State Compute $\mathcal{L}_{\mathrm{value}}$
      \State $\mathcal{L}\gets
        \mathcal{L}_{\mathrm{policy}}
        +0.1\mathcal{L}_{\mathrm{value}}
        -0.01\mathcal{L}_{\mathrm{entropy}}$
    \Else
      \State Compute $L_{\mathrm{ref}}$ using
             Eq.~\eqref{eq:reference_penalty}
      \State $\mathcal{L}\gets
        \mathcal{L}_{\mathrm{policy}}
        +0.05L_{\mathrm{ref}}
        -0.01\mathcal{L}_{\mathrm{entropy}}$
    \EndIf
    \State Update LoRA parameters and, for GAE methods, the value head
           using AdamW
  \EndFor
\EndFor
\end{algorithmic}
\end{algorithm}

% ============================================================
\subsection{Main Results}
\label{sec:main_results}

Table~\ref{tab:main_results} summarizes the final training-time
reward-model scores and the principal optimization diagnostics for all
four methods. Each entry is based on seeds 42, 43, and 44 evaluated
with the same frozen reward model. We define the final score as the
last logged reward-model score at rollout step 500 for each seed and
report the mean and standard deviation across the three seeds. The
policy-loss and drift descriptors are approximate values read from the
terminal regions of the corresponding multi-seed mean curves and are
included only as descriptive diagnostics. Because the GAE and
group-relative configurations differ in advantage estimation, critic
usage, sampling procedure, and response length, the primary comparisons
are PPO-clip versus PPO-OR and GRPO versus GRPO-OR. Cross-family rankings
are reported for completeness but should not be interpreted as a
controlled factorial comparison.

\begin{table}[t]
\centering
\caption{Main results on Anthropic \texttt{hh-rlhf} across three seeds.
Final score is the mean and standard deviation of the last logged
training-time reward-model score at rollout step 500. Policy-loss and
drift entries are approximate terminal-region diagnostics read from the
multi-seed mean curves. Raw policy-loss magnitudes are objective-specific
and are not directly comparable across the clipped-surrogate and OR-MSE
objectives.}
\label{tab:main_results}
\small
\setlength{\tabcolsep}{3.5pt}
\renewcommand{\arraystretch}{1.12}
\begin{tabular}{@{}p{0.13\textwidth}p{0.16\textwidth}p{0.17\textwidth}p{0.31\textwidth}p{0.14\textwidth}@{}}
\toprule
\textbf{Method} &
\textbf{Advantage family} &
\textbf{Final score} &
\textbf{Terminal policy-loss behavior} &
\textbf{Mean $|\rho_t|$ drift} \\
\midrule
PPO-clip~\cite{SCHULMAN2017} &
GAE &
$0.195\pm0.110$ &
Near zero over the terminal region &
$\approx0.3$--$0.5$ \\
PPO-OR &
GAE &
$0.500\pm0.158$ &
Near zero over the terminal region &
$\approx0.3$--$0.5$ \\
GRPO~\cite{SHAO2024} &
Group-relative &
$0.488\pm0.080$ &
Large and highly variable; per-seed peaks reach the hundreds &
$\approx5$--$13$ \\
GRPO-OR &
Group-relative &
$0.457\pm0.027$ &
Near zero over the terminal region &
$\approx5$--$13$ \\
\bottomrule
\end{tabular}
\end{table}

\begin{figure}[t]
\centering
\includegraphics[width=0.9\linewidth]{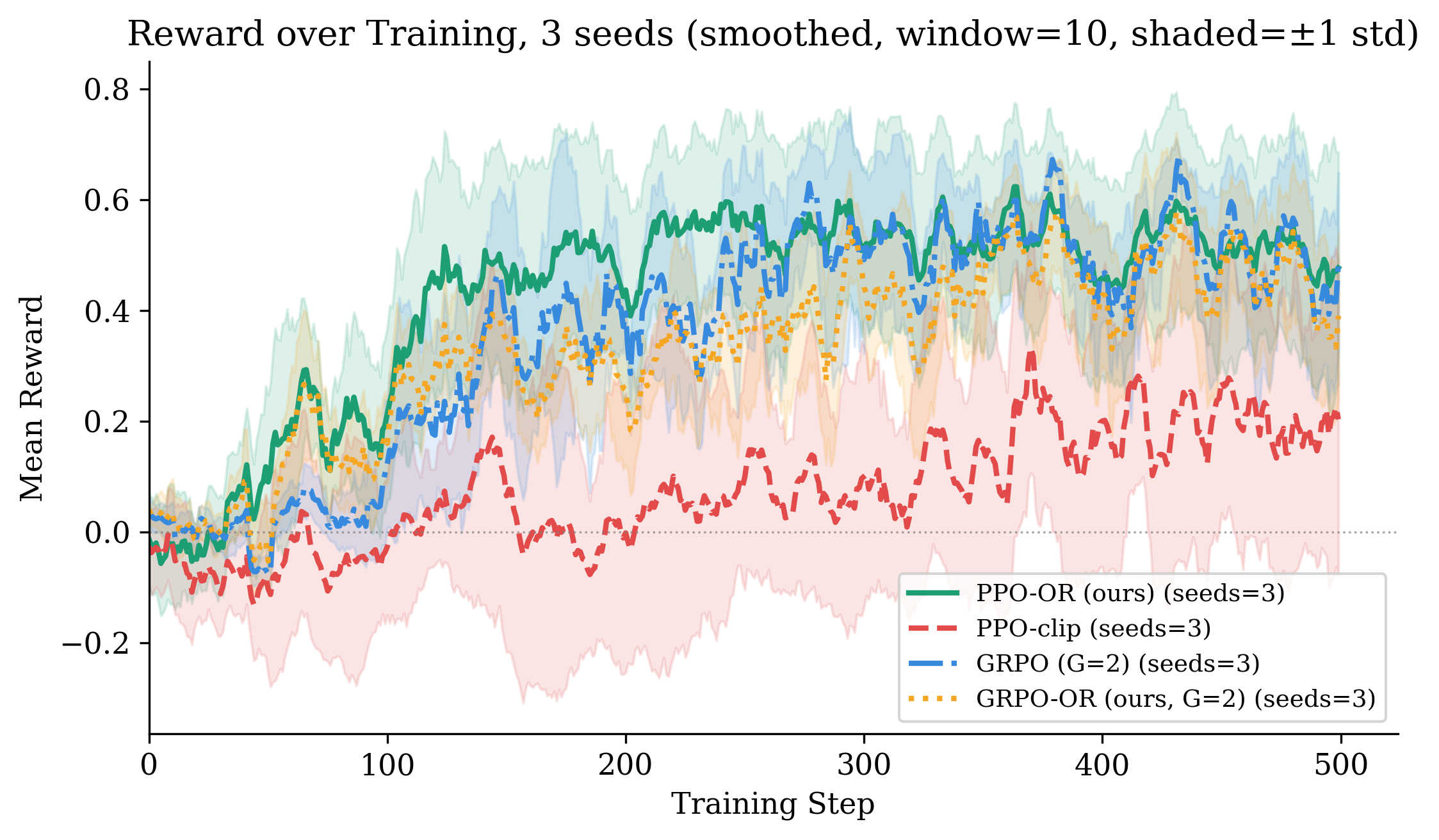}
\caption{Training-time reward-model score over 500 rollout steps for
all four methods, with three seeds per method. Curves are smoothed with
a moving window of 10 steps, and shaded regions denote $\pm1$ standard
deviation across seeds. The shared reward scale supports descriptive
within-experiment comparisons, but its numerical zero is not calibrated
as a success threshold.}
\label{fig:reward}
\end{figure}

Across the four configurations, PPO-OR has the highest observed mean
final score, $0.500\pm0.158$, followed by GRPO at $0.488\pm0.080$,
GRPO-OR at $0.457\pm0.027$, and PPO-clip at $0.195\pm0.110$.
Figure~\ref{fig:reward} shows that PPO-OR separates from PPO-clip during
training and reaches the highest mean terminal score, but it also has
the largest across-seed standard deviation. GRPO-OR has the smallest
reported standard deviation, although three seeds are insufficient to
support a general variance-reduction claim. These values are scores
from the same frozen reward model on training-time rollouts; they should
not be interpreted as calibrated utility values or held-out human
preference rates.

Within the GAE family, replacing the clipped surrogate with OR changes
the observed mean final score from $0.195$ to $0.500$, an absolute
difference of $0.305$ reward-score units. The observed across-seed
standard deviation changes from $0.110$ to $0.158$. Thus, the higher
observed mean is not accompanied by a smaller seed-to-seed spread. The
comparison shows a higher observed mean for PPO-OR than for the matched
PPO-clip configuration under the present model, dataset, training
budget, and hyperparameters; it does not by itself identify the
mechanism responsible for the difference or establish generalization
outside this setting.

Within the group-relative family, replacing the clipped surrogate with
OR changes the mean final score from $0.488$ to $0.457$, an absolute
difference of $-0.031$. At the same time, the across-seed standard
deviation decreases from $0.080$ to $0.027$, making the observed spread
approximately three times smaller. GRPO-OR therefore does not improve
the mean score in this matched comparison, but its three runs are more
tightly clustered. Because the estimate is based on only three seeds,
the reduced dispersion is reported as a descriptive observation rather
than a general stability guarantee.

\begin{figure}[t]
\centering
\includegraphics[width=0.9\linewidth]{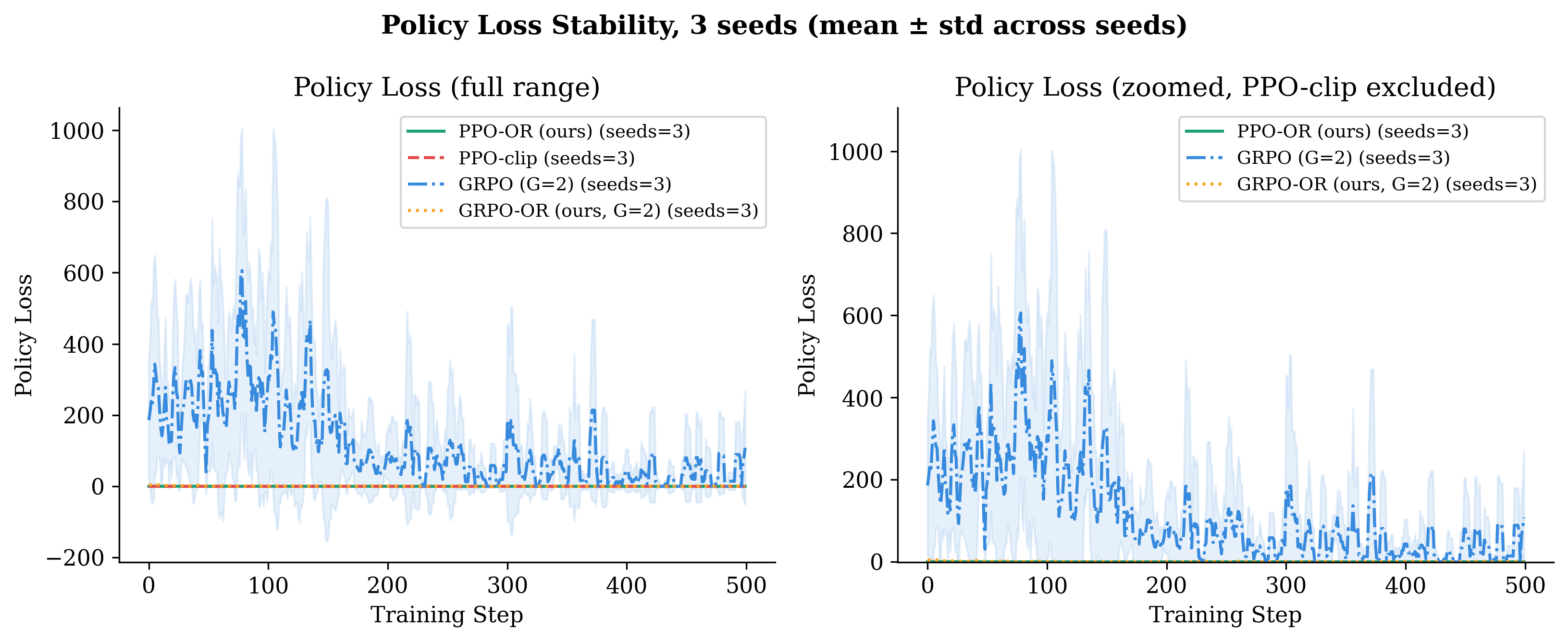}
\caption{Policy-loss traces over training for three seeds per method;
shaded regions denote $\pm1$ standard deviation. The left panel shows
the full plotted range, and the right panel enlarges the near-zero
region. GRPO has substantially larger and more variable logged loss
values than the other configurations. Absolute values across the
clipped-surrogate and OR-MSE objectives are not directly comparable,
so the figure is interpreted as an objective-specific optimization
diagnostic rather than a common performance scale.}
\label{fig:policy_loss}
\end{figure}

Figure~\ref{fig:policy_loss} adds an optimization view that is not
captured by the final reward-model scores. PPO-clip, PPO-OR, and
GRPO-OR remain near zero on their respective plotted loss scales,
whereas the GRPO trace is much larger and has a wide cross-seed band,
with individual peaks in the hundreds. This numerical difference does
not, by itself, show that one objective is better conditioned than
another because the clipped surrogate and OR loss have different
units, normalizations, and gradient coefficients. It nevertheless
shows that the strong final score obtained by GRPO coexists with a
large and variable objective-specific loss trace. Conversely, the
lower PPO-clip final score is not accompanied by an exploding raw loss.
Any explanation based on clipping activity or OR branch occupancy must
therefore be supported by the token-level diagnostics analyzed in
Sec.~\ref{sec:dynamics}, rather than inferred from raw loss magnitude
alone.

The policy-drift traces provide a second distinction between the two
advantage families. The GAE-based methods remain in an approximate
terminal range of $|\rho_t|\approx0.3$--$0.5$, whereas GRPO and GRPO-OR
remain mostly between approximately $5$ and $13$. For GRPO, the logged
mean reaches the display cap of 15 during roughly the first 130 rollout
steps before decreasing and continuing to oscillate at a substantially
higher level than the GAE methods. The same order of drift is observed
for GRPO-OR, indicating that replacing the group-relative clipped
surrogate with OR does not remove the large cross-family difference in
rollout-to-current policy displacement. These drift values are examined
in detail in Sec.~\ref{sec:dynamics} and should not be interpreted as
formal KL bounds.

\begin{figure}[t]
\centering
\includegraphics[width=0.9\linewidth]{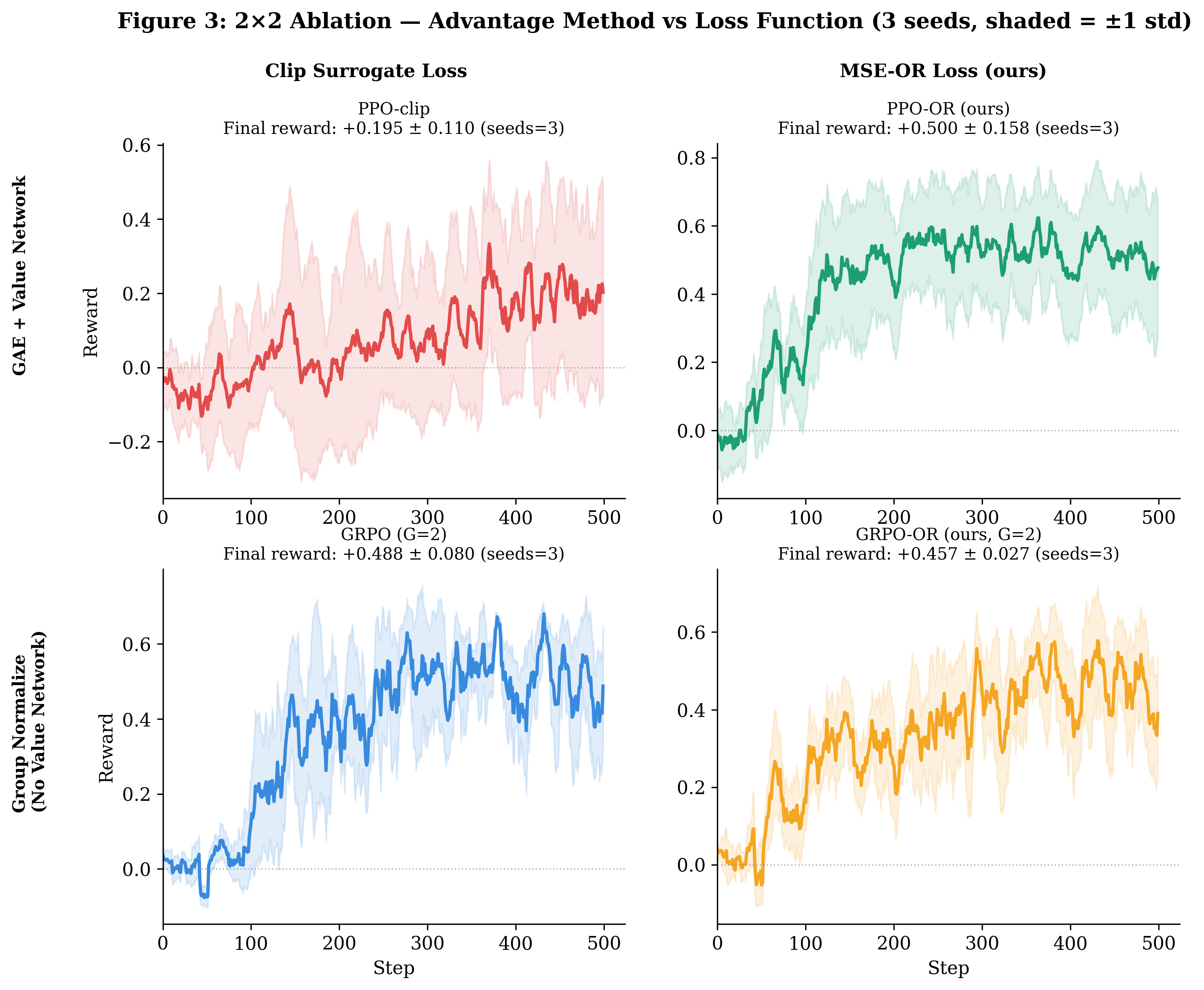}
\caption{Paired visualization of the policy-loss substitution within
each advantage family, using three seeds and $\pm1$ standard-deviation
bands. Under GAE, PPO-OR has a higher observed mean final score than
PPO-clip. Under group-relative advantages, GRPO-OR has a slightly lower
mean final score but a smaller observed cross-seed spread than GRPO.
The layout is not a controlled $2\times2$ factorial ablation because
the two advantage families also differ in critic usage, response length,
and sampling procedure.}
\label{fig:ablation_2x2}
\end{figure}

Figure~\ref{fig:ablation_2x2} summarizes the two matched substitutions
without treating them as a single factorial effect. The GAE comparison
shows a $+0.305$ difference in mean final score when the clipped
surrogate is replaced by OR, together with a $+0.048$ increase in the
observed standard deviation. The group-relative comparison shows a
$-0.031$ difference in mean final score and a $-0.053$ difference in
standard deviation. The two comparisons therefore exhibit different
observed patterns: higher mean score with greater dispersion under GAE,
and a similar but slightly lower mean with smaller dispersion under the
group-relative configuration. The present experiment does not isolate
an interaction between the loss and advantage estimator, because the
two families are not otherwise matched.

\raggedbottom
\setlength{\textfloatsep}{12pt plus 2pt minus 2pt}

\subsection{Training Dynamics}
\label{sec:dynamics}

We examine three complementary diagnostics: occupancy of the OR
overshoot branch, the magnitude of the logged OR targets, and the
sampled rollout-to-current policy displacement. These quantities
characterize different aspects of training and should not be interpreted
as interchangeable measures of convergence or stability. At rollout
step $k$, let $\mathcal{M}_k$ denote the valid generated-token positions.
For an OR-based method, we define the overshoot fraction as
\begin{equation}
  f_{\mathrm{over}}^{(k)}
  = \frac{1}{|\mathcal{M}_k|}
    \sum_{(b,t)\in\mathcal{M}_k}
    \mathbf{1}\!\left[
      \sigma_{b,t}\neq 0
      \ \text{and}\ 
      \sigma_{b,t}\rho_{b,t}>\aOR
    \right].
  \label{eq:overshoot_fraction}
\end{equation}
A token counted by Eq.~\eqref{eq:overshoot_fraction} lies beyond the
favorable OR margin and therefore contributes zero direct residual to
$L_{\mathrm{OR}}$ during that evaluation. A larger value consequently
means that a larger fraction of sampled tokens occupies the strict
zero-residual overshoot branch; it does not mean that OR is applying a
larger corrective gradient. Conversely, a value near zero does not by
itself establish policy convergence, because branch occupancy also
depends on the sampled responses, current advantage signs, and the
rollout-policy snapshot.

The implementation additionally logged a squared-target diagnostic,
shown as ``OR energy'' in the original training logs. To match the
quantity computed by the implementation, we define it as
\begin{equation}
  E_{\tau}^{(k)}
  = \frac{1}{|\mathcal{M}_k|}
    \sum_{(b,t)\in\mathcal{M}_k}
    \tau_{b,t}^{2}.
  \label{eq:or_target_magnitude}
\end{equation}
This is the mean squared magnitude of the logged OR target, not the
mean-squared residual $\left(\rho-\tau\right)^2$ used by the policy
loss. It is therefore reported only as an implementation-specific
target-magnitude diagnostic and not as a second loss or a convergence
certificate.

\begin{figure}[t]
\centering
\includegraphics[width=0.9\linewidth]{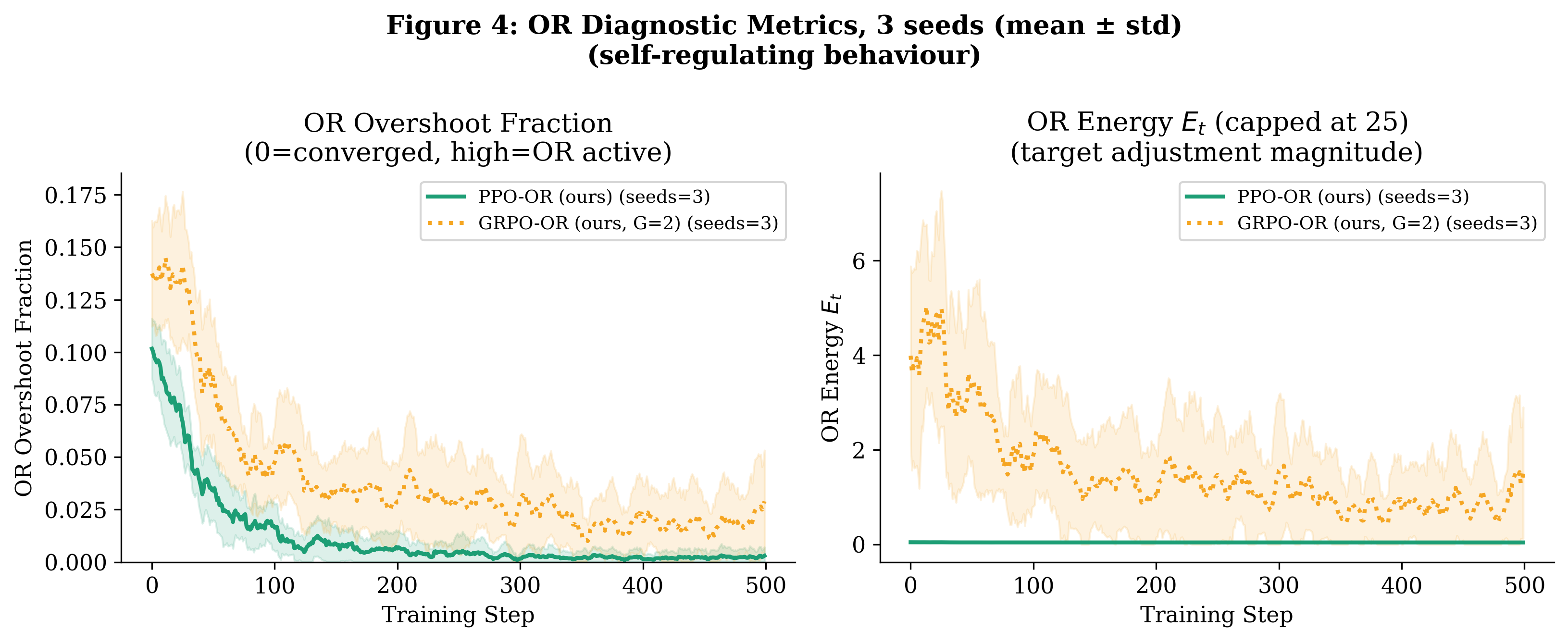}
\caption{OR diagnostics over training for PPO-OR and GRPO-OR, with
three seeds and $\pm1$ standard-deviation bands. Left: overshoot
fraction from Eq.~\eqref{eq:overshoot_fraction}. Tokens on this branch
have zero direct OR residual at that evaluation. Right: mean squared OR
target magnitude from Eq.~\eqref{eq:or_target_magnitude}, labeled ``OR
energy'' in the training logs and capped at 25 for display. Neither
quantity is a formal convergence measure or a bound on policy drift.}
\label{fig:or_diag}
\end{figure}

Figure~\ref{fig:or_diag} shows different branch-occupancy patterns for
the two OR variants. For PPO-OR, the mean overshoot fraction decreases
from approximately $0.10$ near the start of training to approximately
$0.005$ by rollout steps $150$--$200$ and remains low thereafter. For
GRPO-OR, it begins near $0.14$, declines more gradually, and remains
near $0.03$ with visible cross-seed variation through step 500. The
squared-target diagnostic follows a similar descriptive separation:
it remains close to zero on the plotted scale for PPO-OR, whereas for
GRPO-OR it decreases from approximately $4$--$5$ to a noisy terminal
range of approximately $1$--$1.5$. These observations show that the two
OR configurations spend different fractions of training on the strict
overshoot branch and produce different target magnitudes. They do not,
by themselves, establish that either policy has converged or identify
the advantage estimator as the cause, because the GAE and group-relative
configurations also differ in critic usage, response length, and group
sampling.

The objective-specific loss traces in Fig.~\ref{fig:policy_loss}
provide a related but distinct view. GRPO-OR's logged OR loss approaches
a near-zero terminal region, while the GRPO clipped-surrogate trace
remains large and highly variable, with individual peaks in the
hundreds. Because these methods optimize losses with different units,
normalizations, and gradient coefficients, this numerical contrast
cannot be interpreted as a common-scale reduction in policy loss or as
proof that OR removes a particular instability mechanism. It does show
that the OR residual remains small under the GRPO-OR objective while the
matched GRPO run retains a large objective-specific trace. Under the OR
loss, a small value can arise because active tokens are close to their
signed margins, because tokens occupy the zero-residual branch, or from
a combination of both; Fig.~\ref{fig:or_diag} is needed to distinguish
these cases descriptively.

For the displacement diagnostic, we use the valid-token mean absolute
log-ratio between the current policy and the rollout policy,
\begin{equation}
  \begin{aligned}
    D_{\mathrm{roll}}^{(k)}
    &= \frac{1}{|\mathcal{M}_k|}
       \sum_{(b,t)\in\mathcal{M}_k}
       \bigl|\rho_{b,t}\bigr|, \\
    \rho_{b,t}
    &= \log\pinew(a_{b,t}\mid s_{b,t})
       - \log\piold(a_{b,t}\mid s_{b,t}).
  \end{aligned}
  \label{eq:rollout_drift}
\end{equation}
The plotted value is capped at 15 for readability. This sampled
absolute log-ratio is not a categorical KL divergence and does not
measure displacement from the frozen reference policy.

\begin{figure}[t]
\centering
\includegraphics[width=0.9\linewidth]{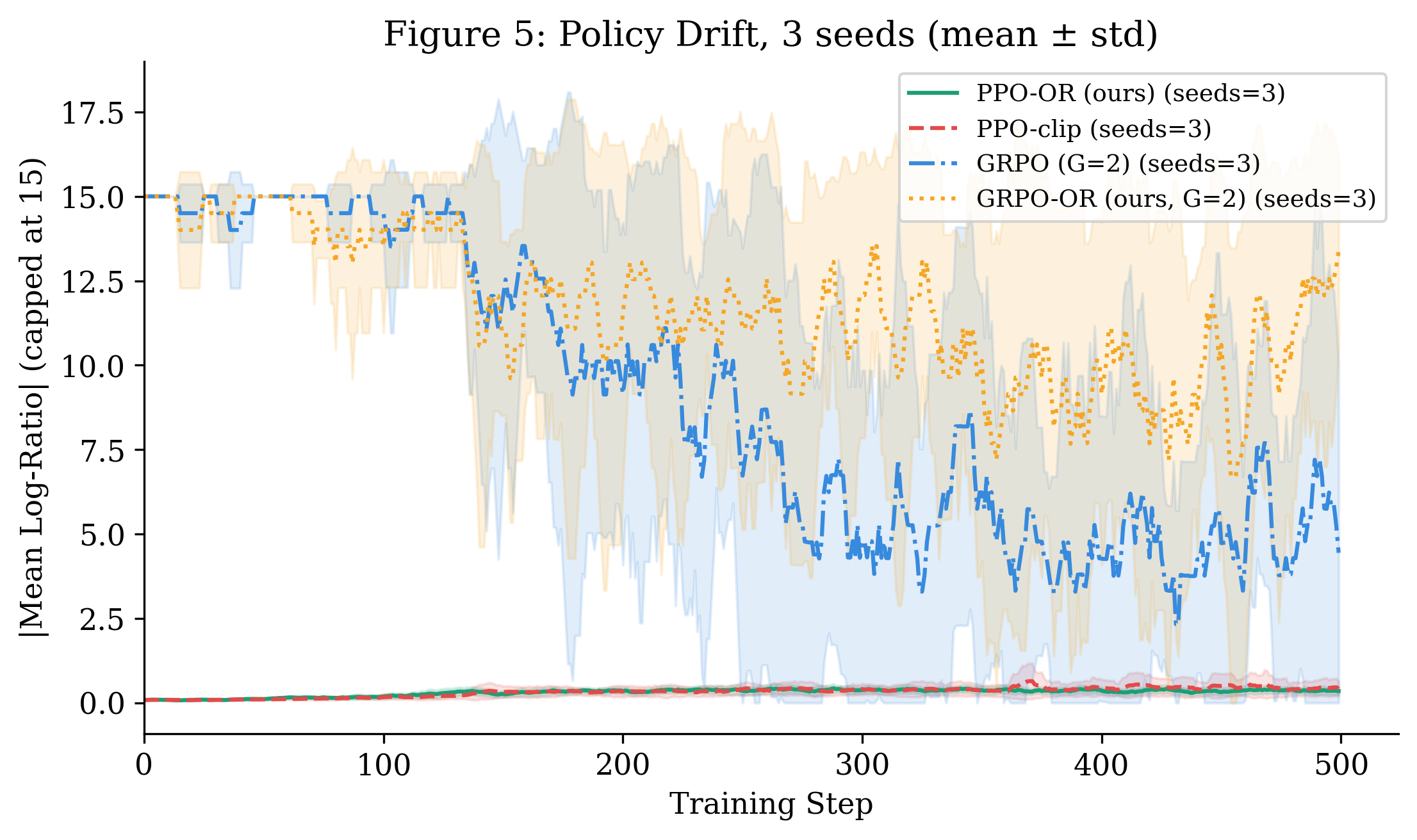}
\caption{Sampled rollout-to-current policy displacement from
Eq.~\eqref{eq:rollout_drift}, averaged over valid generated tokens for
three seeds. Curves show multi-seed means with $\pm1$ standard-deviation
bands and are capped at 15 for display; values at the cap may represent
larger uncapped displacement. The metric is an absolute sampled
log-ratio, not a KL divergence or a trust-region bound.}
\label{fig:drift}
\end{figure}

Figure~\ref{fig:drift} separates the GAE and group-relative
configurations more sharply than the final reward-model scores. PPO-clip
and PPO-OR remain below approximately $0.5$ throughout most of training
and end in the approximate range $0.3$--$0.5$. GRPO reaches the display
cap during roughly the first 130 rollout steps and subsequently
oscillates mostly between approximately $5$ and $13$. GRPO-OR remains
on the same order and exceeds GRPO during portions of training rather
than consistently reducing the displacement. Thus, replacing the
clipped surrogate with OR does not remove the large rollout-to-current
policy movement observed in the group-relative configuration.

This result is consistent with the scope of the analytical properties
in Sec.~\ref{sec:theory}. Proposition~\ref{prop:diff} states that a token
on the favorable saturated branch contributes zero direct OR residual;
it does not impose a hard trust region or bound the cumulative optimizer
update, and the active-branch derivative remains unbounded in the
adverse direction. The realized update also combines gradients from
other tokens, entropy and reference-policy terms, and, for PPO-OR, the
value objective. Moreover, rollout batches and advantage signs change
across iterations. The training dynamics therefore distinguish
sample-level one-sided saturation from policy-level proximity: OR
changes when individual token losses become inactive, but the present
results do not show that it constrains aggregate policy displacement.

% ============================================================
\subsection{Runtime and Memory Usage}
\label{sec:compute}

We report wall-clock and GPU-memory measurements for the implemented
training pipelines rather than asymptotic computational complexity.
Table~\ref{tab:compute_efficiency} summarizes the measured runtime of
one representative run for each method. The logged step time covers the
timed training-step region, whereas total wall-clock time also includes
model initialization, reward-model transfers, logging, checkpointing,
and other run-level overhead. The two timing columns should therefore
not be multiplied directly. Because these measurements were obtained
from one run per method rather than averaged across seeds, they should
be interpreted as descriptive measurements of this implementation, not
as a hardware-independent ranking of the four objectives.

\begin{table}[t]
\centering
\caption{Runtime and steady-state GPU-memory measurements for the
implemented training pipelines. Step times were recorded from one
representative run per method; total runtime includes run-level overhead.
Steady-state VRAM values summarize the ranges observed in
Fig.~\ref{fig:vram}.}
\label{tab:compute_efficiency}
\small
\setlength{\tabcolsep}{3.5pt}
\renewcommand{\arraystretch}{1.12}
\begin{tabular}{@{}lccc@{}}
\toprule
\textbf{Method} &
\textbf{Logged time/step} &
\textbf{Total time} &
\textbf{Steady-state VRAM} \\
\midrule
PPO-OR  & $1.3$ s & $22$ min & $\approx5.5$--$6$ GB \\
PPO-clip & $4.4$ s & $42$ min & $\approx5.5$--$6$ GB \\
GRPO     & $8.9$ s & $77$ min & $\approx4$ GB \\
GRPO-OR  & $4.6$ s & $49$ min & $\approx4$ GB \\
\bottomrule
\end{tabular}
\end{table}

Under this implementation, PPO-OR had the lowest measured step time and
total runtime, followed by PPO-clip and GRPO-OR, while GRPO required the
longest run. The group-relative pipelines generate two responses per
prompt and repeatedly move the reward-model backbone between CPU and GPU,
which contributes additional generation and transfer overhead. The
substantial timing differences between the clipped and OR variants may
also reflect implementation details outside the scalar policy objective,
including tensor construction, masking, logging, and the numerical
behavior of each training path. We therefore do not claim that replacing
the clipped surrogate with OR necessarily yields the same runtime ordering
in other implementations.

\begin{figure}[t]
\centering
\includegraphics[width=0.9\linewidth]{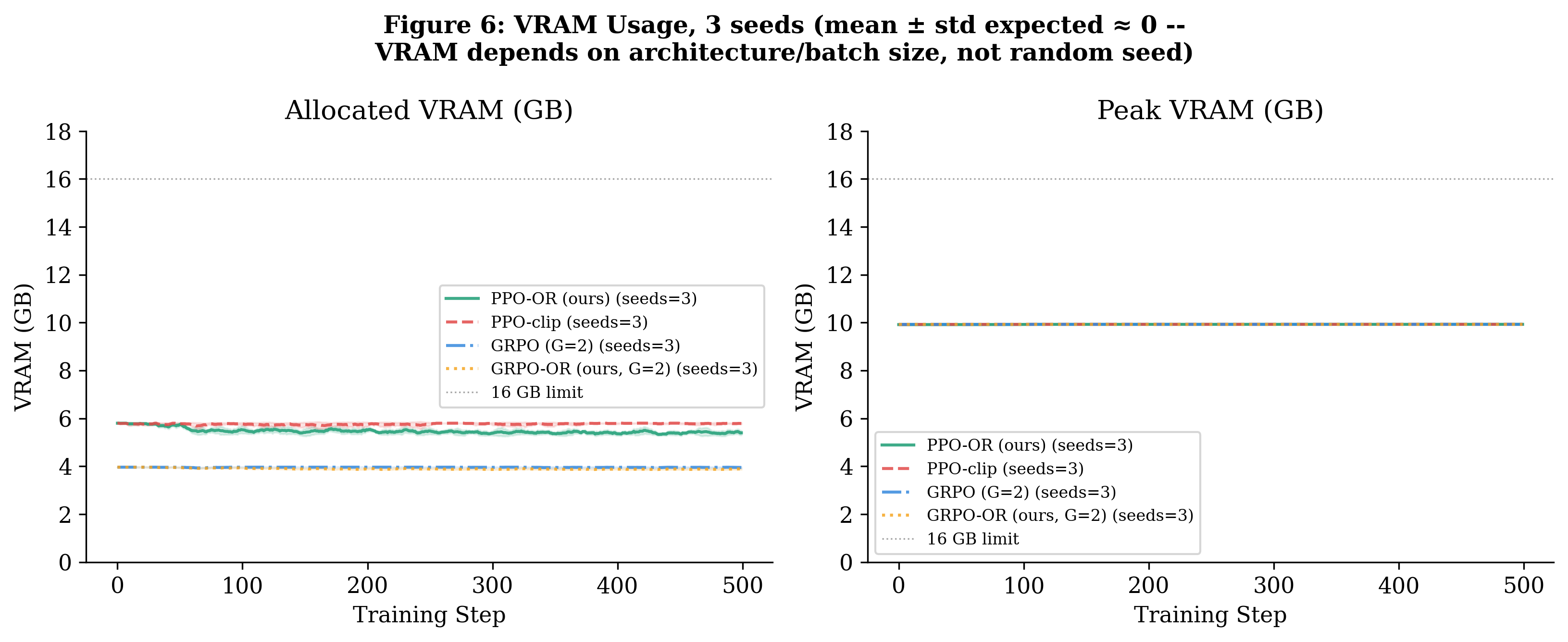}
\caption{Allocated GPU memory over 500 rollout steps for three seeds per
method. Curves show the seed mean and shaded regions show $\pm1$ standard
deviation. All four pipelines remain below the reported 16.7~GB device
budget. The narrow bands indicate limited run-to-run variation in the
logged memory allocation under the fixed architecture and batch settings.}
\label{fig:vram}
\end{figure}

Figure~\ref{fig:vram} shows a short-lived peak of approximately 10~GB
during initialization, when the policy, reference, and reward-model
components are loaded before the steady-state execution pattern is
established. During training, PPO-OR and PPO-clip allocate approximately
5.5--6~GB, whereas GRPO and GRPO-OR allocate approximately 4~GB. The
lower steady-state GPU allocation of the group-relative pipelines is due
to the implemented CPU offloading of the reward-model backbone and their
shorter 64-token response cap; it should not be interpreted as an
intrinsic memory advantage of group-relative optimization. The GAE-based
methods retain the reward-model components on the GPU and generate up to
128 response tokens.

The available memory constrained the group-relative experiments to
$G=2$. Under the implemented pipeline, this was the largest group size
that could be run reliably together with four prompts per rollout batch.
An engineering extrapolation from the observed allocations suggested that
$G=4$ could approach approximately 13~GB of peak usage and might be
feasible with additional memory optimization, whereas $G=8$ could require
approximately 18--20~GB and would exceed the available budget. These are
planning estimates rather than direct end-to-end measurements and do not
include every possible memory optimization. Larger groups would change
the group-relative advantage estimates, but whether they would reduce the
policy-displacement behavior reported in Sec.~\ref{sec:dynamics} remains
an empirical question.

% ============================================================
\section{Discussion}
\label{sec:discussion}

\subsection{Interpreting the Observed Effects of OR}

The central result is not that OR uniformly stabilizes policy
optimization, but that its observed effect depends on the surrounding
advantage-estimation pipeline and on the diagnostic being examined. In
the matched GAE comparison, PPO-OR has a higher observed mean final
training-time reward-model score than PPO-clip,
$0.500\pm0.158$ versus $0.195\pm0.110$, an absolute difference of
$0.305$ score units; the observed cross-seed standard deviation is also
larger by $0.048$. In the matched group-relative comparison, replacing the
clipped surrogate with OR changes the mean score from $0.488\pm0.080$
to $0.457\pm0.027$, a difference of $-0.031$, while the observed
standard deviation decreases by $0.053$. GRPO-OR also produces a small
OR residual over the terminal region, whereas the matched GRPO run has
a large and variable clipped-surrogate trace. Because the two methods
optimize objectives with different units and gradient coefficients,
this contrast is an objective-specific training observation rather than
a common-scale reduction in policy loss. Most importantly, GRPO-OR does
not consistently reduce the large rollout-to-current displacement of
the group-relative configuration.

These results separate three questions that are often grouped under the
single label of training stability: whether the policy obtains a high
score under the learned reward model, whether the optimized scalar
objective has a well-behaved numerical trace, and whether the updated
policy remains close to the rollout policy. The three quantities do not
move together in our experiments. PPO-clip has a near-zero terminal
loss trace but the lowest mean reward-model score. GRPO has a strong
mean final score despite a large objective-specific trace and much
larger sampled policy displacement than the GAE methods. GRPO-OR has a
small OR residual but remains on the same order of displacement as
GRPO. Reward, raw loss, and policy displacement therefore provide
complementary rather than interchangeable evidence.

This interpretation also clarifies the relation of PPO-OR to prior
policy-optimization methods. PPO and GRPO use a clipped probability-ratio
surrogate~\cite{SCHULMAN2017,SHAO2024}, whereas OR originated as a
target-reset procedure for MSE-based classification~\cite{TYAGI2024}.
The present work transfers that target-reset construction to a
one-sided squared margin in token-level log-ratio space. The experiments
show that this substitution can change reward-model scores and
objective-specific dynamics under the evaluated conditions, but they do
not support treating OR as a replacement for every stabilizing role
played by clipping, a critic, reference regularization, or the
advantage estimator. In particular, the smooth one-sided saturation of
the scalar OR loss should not be interpreted as a policy-level trust
region.

\subsection{Interpreting the Persistent Policy Displacement}

The group-relative displacement result is consistent with the scope of
the Method section. Proposition~\ref{prop:diff} establishes that a token
on the favorable saturated branch contributes zero direct OR residual
during that evaluation. It does not constrain the optimizer step,
bound the adverse branch, reverse displacement accumulated in earlier
updates, or guarantee proximity to either $\piold$ or $\piref$. The
active-branch derivative can grow without bound when the sampled
log-ratio moves far in the adverse direction, and the realized minibatch
update combines contributions from many tokens together with entropy
and reference-policy terms. Four optimization epochs are also performed
for each rollout batch before a new rollout-policy snapshot is formed.
Consequently, sample-level saturation and small OR residuals can coexist
with large aggregate movement from the rollout policy.

Changing batches and advantage signs add another distinction between
the scalar analysis and the complete training process. The favorable
branch for a sampled token is defined using the current estimated
advantage sign, while later rollouts may contain different tokens,
states, and signs. A token that is inactive in one evaluation therefore
does not remain permanently constrained in subsequent updates. This
observation explains why the Method does not predict a cumulative drift
bound, but it does not by itself explain why the group-relative runs
reach displacement values an order of magnitude larger than the GAE
runs.

Within the group-relative family, GRPO and GRPO-OR share the same
group size, response cap, sampling procedure, optimization epochs, and
reference-policy treatment. The matched comparison therefore supports
one limited conclusion: replacing the clipped surrogate with the present
OR loss is insufficient to remove the observed displacement under this
configuration. The mechanisms responsible for the larger
group-relative displacement, and the factors that might change it, are
left to Sec.~\ref{sec:open_questions} rather than inferred from the
current runs.

\subsection{Limitations and Scope}
\label{sec:limitations}

The statistical evidence is limited to three seeds per method. The
reported means and standard deviations describe the observed runs but do
not support significance testing, reliable ranking of variance, or
claims that the same ordering would persist under additional seeds. This
qualification applies both to the larger observed spread of PPO-OR and
to the smaller observed spread of GRPO-OR. The terminal policy-loss and
drift entries in Table~\ref{tab:main_results} are approximate ranges
read from multi-seed mean curves rather than statistics computed from
exported per-seed terminal values. They support the reported
order-of-magnitude contrast, but not fine-grained numerical comparison.

Evaluation relies on training-time scores from one frozen reward model,
not on independent human judgments or a fixed held-out preference
benchmark. The Bradley--Terry objective identifies score differences but
does not calibrate an absolute zero, and the reward model's reported
60--61\% pairwise accuracy indicates a noisy preference signal. Sharing
this scorer across methods removes reward-model variation between runs,
but it does not establish that higher proxy scores correspond to better
human-preferred behavior or that scorer errors affect every policy
distribution equally. More generally, optimizing an imperfect reward
model can separate proxy reward from the intended preference objective
~\cite{GAO2023}. External validity is further limited by the use of one
instruction-tuned 1B model and one dataset. The study does not include a
task-specific SFT stage, and \texttt{hh-rlhf} supplies both reward-model
pairs and policy-optimization prompts; possible prompt overlap and reuse
of the same source may favor the training distribution. The findings
therefore characterize this constrained post-training setup rather than
a general ranking of PPO, GRPO, and OR-based methods. 

The matched comparisons use shared rather than independently tuned
hyperparameters. In the GAE family, the common nominal setting
$\epsilon=\aOR=0.2$ does not imply identical saturation boundaries:
PPO clipping corresponds to the asymmetric log-ratio interval
$[\log(0.8),\log(1.2)]\approx[-0.223,+0.182]$, whereas PPO-OR uses
symmetric margins at $[-0.2,+0.2]$. No sensitivity analysis was
performed for the OR margin, value-head warmup, value clipping,
method-specific learning rates, or optimization epochs. PPO-clip's lower
observed score in this experiment should therefore not be interpreted as
a general inability to learn. The group-relative results are likewise
restricted by the hardware-constrained choice $G=2$, for which the
standard-deviation convention is especially consequential. Larger
groups may alter reward diversity, advantage magnitudes, and update
dynamics, but the present runs do not show whether they would reduce
policy displacement. Cross-family comparisons are additionally
confounded by critic usage, reference-regularization placement, response
length, sampling procedure, and CPU offloading; the strongest evidence
remains the two within-family policy-loss substitutions.

Finally, runtime measurements come from one representative run per
method and reflect the particular implementation, including model
placement, reward-model transfers, masking, and logging. They do not
establish an intrinsic computational advantage of the scalar OR
objective. The larger-group memory values are engineering projections
rather than measured end-to-end runs. The questions left unresolved by
these limitations and the experiments needed to address them are
presented separately in Secs.~\ref{sec:open_questions}
and~\ref{sec:future}.

% ============================================================
\section{Open Questions Raised by the Study}
\label{sec:open_questions}

The experiments leave three questions unresolved.

\emph{What drives the large rollout-to-current policy displacement in
the group-relative runs?} GRPO and GRPO-OR both move much farther from
the rollout policy than the GAE-based methods, even though the GRPO-OR
residual approaches zero and its overshoot fraction declines during
training. The present study cannot isolate the effects of the small
response group ($G=2$), critic removal, the shorter response cap,
repeated optimization epochs, or the placement of reference-policy
regularization.

\emph{Do the OR diagnostics have predictive value?} The overshoot
fraction measures how often tokens lie beyond the favorable margin,
while the logged ``OR energy'' measures squared target magnitude. A
declining overshoot fraction and a near-zero OR residual describe the
state of the OR objective, but neither establishes policy convergence.
It remains unknown whether either diagnostic predicts subsequent reward
change, policy displacement, or optimization failure.

\emph{Why do the observed effects of OR differ between the two matched
comparisons?} Under GAE, PPO-OR produces a higher observed mean
reward-model score but a larger observed seed spread. Under
group-relative advantages, GRPO-OR produces a smaller observed seed
spread and a near-zero terminal residual without a higher observed mean
score. The current evidence cannot determine whether this contrast is
driven by advantage construction, interaction with the OR margin,
reward-model noise, or other family-specific differences in the
training pipelines.

% ============================================================
\section{Future Directions}
\label{sec:future}

The highest-priority experiment is a controlled group-size study with
$G\in\{2,4,8\}$ while holding the model, prompt distribution, response
length, sampled-token budget, update epochs, and reference-policy
coefficient fixed. The study should report uncapped rollout-to-current
displacement, displacement from the frozen reference policy,
reward-model score, and OR branch occupancy. If larger groups do not
reduce displacement, stronger adaptive reference regularization or an
explicit two-sided displacement penalty should be compared against
standard KL-based controls. The GAE comparison should be repeated with more seeds and with both
shared and method-specific hyperparameter tuning. The most relevant
factors are value-head warmup, the OR margin, and the mismatch between
PPO's asymmetric log-ratio clipping boundaries and the symmetric OR
margins used here. An asymmetric OR variant would test whether the
current margin choice contributes to the observed mean--dispersion
trade-off.

A second extension is to retain advantage-magnitude information on
active tokens while preserving the OR zero-residual branch:
\[
L_{\mathrm{OR}}^{w}
=
\frac{1}{N_{\mathcal{M}}}
\sum_{(b,t)\in\mathcal{M}}
\left|\hat{A}_{b,t}\right|
\left(
\rho_{b,t}-\sg[\tau_{b,t}]
\right)^2.
\]
This objective is proposed rather than evaluated here. Raw, normalized,
and clipped magnitude weights should be compared to determine whether
restoring magnitude information improves reward or displacement, or
instead reintroduces sensitivity to noisy advantage estimates. Finally, external validity requires disjoint reward-model and RL prompt
sets, checkpoint selection on held-out validation data, and final
evaluation on separate prompts using independent preference judgments.
The matched PPO-clip versus PPO-OR and GRPO versus GRPO-OR comparisons
should then be repeated with a stronger reward model, a task-specific
SFT checkpoint, and larger policies. A staged study at approximately
1B, 3B, and 7B--8B scale is more defensible than extrapolating directly
from the present 1B experiment to very large models.

\section{Conclusion}
\label{sec:conclusion}

We introduced PPO-OR and GRPO-OR, which replace the PPO-style clipped
policy term with a continuously differentiable, one-sided squared margin
in rollout-relative token log-ratio space while retaining the advantage
estimator and remaining optimization components of their parent
pipelines. The observed effect depends on the advantage family. In the
matched GAE comparison, PPO-OR achieves a higher mean final
training-time reward-model score than PPO-clip by $0.305$ score units,
although its observed across-seed spread is also larger. In the matched
group-relative comparison, GRPO-OR does not achieve a higher mean score,
but it shows a smaller observed seed spread, a near-zero terminal OR
residual, and a declining overshoot fraction. These objective-specific
diagnostics should not be interpreted as directly comparable loss
magnitudes or as evidence of policy convergence.The principal qualification is that local OR saturation does not by itself control aggregate policy movement. Both group-relative methods
retain substantially larger rollout-to-current log-ratio displacement
than the GAE-based methods, and replacing the clipped GRPO objective with
OR does not remove that displacement. The supported conclusion is
therefore narrower than a general claim of superiority: OR changes the
sample-level saturation geometry and can alter reward-score and
optimization behavior, but it is not a trust-region mechanism and does
not replace the roles of advantage estimation, critic learning, or
reference-policy regularization. Whether the observed family-dependent
effects persist under larger response groups, stronger reward models,
held-out preference evaluation, and method-specific tuning remains open.

\bibliographystyle{unsrt}
\bibliography{arxiv}

@article{WILLIAMS1992,
  title   = {Simple Statistical Gradient-Following Algorithms for Connectionist Reinforcement Learning},
  author  = {Williams, Ronald J.},
  journal = {Machine Learning},
  volume  = {8},
  pages   = {229--256},
  year    = {1992},
  doi     = {10.1007/BF00992696}
}

@inproceedings{SUTTON1999,
  title     = {Policy Gradient Methods for Reinforcement Learning with Function Approximation},
  author    = {Sutton, Richard S. and McAllester, David A. and Singh, Satinder P. and Mansour, Yishay},
  booktitle = {Advances in Neural Information Processing Systems},
  volume    = {12},
  pages     = {1057--1063},
  year      = {1999}
}

@inproceedings{SCHULMAN2015,
  title     = {Trust Region Policy Optimization},
  author    = {Schulman, John and Levine, Sergey and Moritz, Philipp and Jordan, Michael I. and Abbeel, Pieter},
  booktitle = {Proceedings of the 32nd International Conference on Machine Learning},
  series    = {Proceedings of Machine Learning Research},
  volume    = {37},
  pages     = {1889--1897},
  year      = {2015},
  publisher = {PMLR},
  url       = {https://proceedings.mlr.press/v37/schulman15.html}
}

@article{SCHULMAN2016,
  title         = {High-Dimensional Continuous Control Using Generalized Advantage Estimation},
  author        = {Schulman, John and Moritz, Philipp and Levine, Sergey and Jordan, Michael I. and Abbeel, Pieter},
  journal       = {International Conference on Learning Representations},
  year          = {2016},
  eprint        = {1506.02438},
  archivePrefix = {arXiv},
  primaryClass  = {cs.LG},
  url           = {https://arxiv.org/abs/1506.02438}
}

@article{SCHULMAN2017,
  title         = {Proximal Policy Optimization Algorithms},
  author        = {Schulman, John and Wolski, Filip and Dhariwal, Prafulla and Radford, Alec and Klimov, Oleg},
  journal       = {arXiv preprint arXiv:1707.06347},
  year          = {2017},
  eprint        = {1707.06347},
  archivePrefix = {arXiv},
  primaryClass  = {cs.LG},
  url           = {https://arxiv.org/abs/1707.06347}
}

@inproceedings{CHRISTIANO2017,
  title     = {Deep Reinforcement Learning from Human Preferences},
  author    = {Christiano, Paul F. and Leike, Jan and Brown, Tom and Martic, Miljan and Legg, Shane and Amodei, Dario},
  booktitle = {Advances in Neural Information Processing Systems},
  volume    = {30},
  year      = {2017},
  url       = {https://proceedings.neurips.cc/paper/2017/hash/d5e2c0adad503c91f91df240d0cd4e49-Abstract.html}
}

@inproceedings{STIENNON2020,
  title     = {Learning to Summarize with Human Feedback},
  author    = {Stiennon, Nisan and Ouyang, Long and Wu, Jeffrey and Ziegler, Daniel and Lowe, Ryan and Voss, Chelsea and Radford, Alec and Amodei, Dario and Christiano, Paul F.},
  booktitle = {Advances in Neural Information Processing Systems},
  volume    = {33},
  year      = {2020},
  url       = {https://proceedings.neurips.cc/paper/2020/hash/1f89885d556929e98d3ef9b86448f951-Abstract.html}
}

@inproceedings{OUYANG2022,
  title     = {Training Language Models to Follow Instructions with Human Feedback},
  author    = {Ouyang, Long and Wu, Jeffrey and Jiang, Xu and Almeida, Diogo and Wainwright, Carroll and Mishkin, Pamela and Zhang, Chong and Agarwal, Sandhini and Slama, Katarina and Ray, Alex and others},
  booktitle = {Advances in Neural Information Processing Systems},
  volume    = {35},
  year      = {2022},
  url       = {https://proceedings.neurips.cc/paper_files/paper/2022/hash/b1efde53be364a73914f58805a001731-Abstract.html}
}

@article{BAIETAL2022,
  title         = {Training a Helpful and Harmless Assistant with Reinforcement Learning from Human Feedback},
  author        = {Bai, Yuntao and Jones, Andy and Ndousse, Kamal and Askell, Amanda and Chen, Anna and DasSarma, Nova and Drain, Dawn and Fort, Stanislav and Ganguli, Deep and Henighan, Tom and others},
  journal       = {arXiv preprint arXiv:2204.05862},
  year          = {2022},
  eprint        = {2204.05862},
  archivePrefix = {arXiv},
  primaryClass  = {cs.CL},
  url           = {https://arxiv.org/abs/2204.05862}
}

@article{BRADLEY1952,
  title   = {Rank Analysis of Incomplete Block Designs: I. The Method of Paired Comparisons},
  author  = {Bradley, Ralph Allan and Terry, Milton E.},
  journal = {Biometrika},
  volume  = {39},
  number  = {3/4},
  pages   = {324--345},
  year    = {1952},
  doi     = {10.1093/biomet/39.3-4.324}
}

@inproceedings{ENGSTROM2020,
  title     = {Implementation Matters in Deep RL: A Case Study on PPO and TRPO},
  author    = {Engstrom, Logan and Ilyas, Andrew and Santurkar, Shibani and Tsipras, Dimitris and Janoos, Firdaus and Rudolph, Larry and Madry, Aleksander},
  booktitle = {International Conference on Learning Representations},
  year      = {2020},
  url       = {https://openreview.net/forum?id=r1etN1rtPB}
}

@inproceedings{ILYAS2020,
  title     = {A Closer Look at Deep Policy Gradients},
  author    = {Ilyas, Andrew and Engstrom, Logan and Santurkar, Shibani and Tsipras, Dimitris and Janoos, Firdaus and Rudolph, Larry and Madry, Aleksander},
  booktitle = {International Conference on Learning Representations},
  year      = {2020},
  url       = {https://openreview.net/forum?id=ryxdEkHtPS}
}

@inproceedings{WANG2020,
  title     = {Truly Proximal Policy Optimization},
  author    = {Wang, Yuhui and He, Hao and Tan, Xiaoyang},
  booktitle = {Proceedings of the 35th Uncertainty in Artificial Intelligence Conference},
  series    = {Proceedings of Machine Learning Research},
  volume    = {115},
  pages     = {113--122},
  year      = {2020},
  publisher = {PMLR},
  url       = {https://proceedings.mlr.press/v115/wang20b.html}
}

@inproceedings{GARG2021,
  title     = {On Proximal Policy Optimization's Heavy-Tailed Gradients},
  author    = {Garg, Saurabh and Zhanson, Joshua and Parisotto, Emilio and Prasad, Adarsh and Kolter, J. Zico and Lipton, Zachary Chase and Balakrishnan, Sivaraman and Salakhutdinov, Ruslan and Ravikumar, Pradeep},
  booktitle = {Proceedings of the 38th International Conference on Machine Learning},
  series    = {Proceedings of Machine Learning Research},
  volume    = {139},
  pages     = {3610--3619},
  year      = {2021},
  publisher = {PMLR},
  url       = {https://proceedings.mlr.press/v139/garg21b.html}
}

@inproceedings{JIN2024,
  title     = {On Stationary Point Convergence of PPO-Clip},
  author    = {Jin, Ruinan and Li, Shuai and Wang, Baoxiang},
  booktitle = {International Conference on Learning Representations},
  year      = {2024},
  url       = {https://openreview.net/forum?id=uznKlCpWjV}
}

@inproceedings{XIE2025,
  title     = {Simple Policy Optimization},
  author    = {Xie, Zhengpeng and Zhang, Qiang and Yang, Fan and Hutter, Marco and Xu, Renjing},
  booktitle = {Proceedings of the 42nd International Conference on Machine Learning},
  series    = {Proceedings of Machine Learning Research},
  volume    = {267},
  pages     = {68813--68824},
  year      = {2025},
  publisher = {PMLR},
  url       = {https://proceedings.mlr.press/v267/xie25m.html}
}

@article{SHAO2024,
  title         = {{DeepSeekMath}: Pushing the Limits of Mathematical Reasoning in Open Language Models},
  author        = {Shao, Zhihong and Wang, Peiyi and Zhu, Qihao and Xu, Runxin and Song, Junxiao and Bi, Xiao and Zhang, Haowei and Zhang, Mingchuan and Li, Y. K. and Wu, Y. and Guo, Daya},
  journal       = {arXiv preprint arXiv:2402.03300},
  year          = {2024},
  eprint        = {2402.03300},
  archivePrefix = {arXiv},
  primaryClass  = {cs.CL},
  url           = {https://arxiv.org/abs/2402.03300}
}

@article{DEEPSEEK2025,
  title         = {{DeepSeek-R1}: Incentivizing Reasoning Capability in {LLMs} via Reinforcement Learning},
  author        = {{DeepSeek-AI} and Guo, Daya and Yang, Dejian and Zhang, Haowei and Song, Junxiao and Zhang, Ruoyu and Xu, Runxin and Zhu, Qihao and Ma, Shirong and Wang, Peiyi and others},
  journal       = {arXiv preprint arXiv:2501.12948},
  year          = {2025},
  eprint        = {2501.12948},
  archivePrefix = {arXiv},
  primaryClass  = {cs.CL},
  url           = {https://arxiv.org/abs/2501.12948}
}

@article{BYTEDANCE2025,
  title         = {{DAPO}: An Open-Source {LLM} Reinforcement Learning System at Scale},
  author        = {Yu, Qiying and Zhang, Zheng and Zhu, Ruofei and Yuan, Yufeng and Zuo, Xiaochen and Yue, Yu and Dai, Weinan and Fan, Tiantian and Liu, Gaohong and others},
  journal       = {arXiv preprint arXiv:2503.14476},
  year          = {2025},
  eprint        = {2503.14476},
  archivePrefix = {arXiv},
  primaryClass  = {cs.LG},
  url           = {https://arxiv.org/abs/2503.14476}
}

@article{DRGRPO,
  title         = {Understanding R1-Zero-Like Training: A Critical Perspective},
  author        = {Liu, Zichen and Chen, Changyu and Li, Wenjun and Qi, Penghui and Pang, Tianyu and Du, Chao and Lee, Wee Sun and Lin, Min},
  journal       = {arXiv preprint arXiv:2503.20783},
  year          = {2025},
  eprint        = {2503.20783},
  archivePrefix = {arXiv},
  primaryClass  = {cs.CL},
  url           = {https://arxiv.org/abs/2503.20783}
}

@inproceedings{RAFAILOV2023,
  title     = {Direct Preference Optimization: Your Language Model is Secretly a Reward Model},
  author    = {Rafailov, Rafael and Sharma, Archit and Mitchell, Eric and Ermon, Stefano and Manning, Christopher D. and Finn, Chelsea},
  booktitle = {Advances in Neural Information Processing Systems},
  volume    = {36},
  year      = {2023},
  url       = {https://proceedings.neurips.cc/paper_files/paper/2023/hash/a85b405ed65c6477a4fe8302b5e06ce7-Abstract-Conference.html}
}

@misc{LI2024,
      title={ReMax: A Simple, Effective, and Efficient Reinforcement Learning Method for Aligning Large Language Models}, 
      author={Ziniu Li and Tian Xu and Yushun Zhang and Zhihang Lin and Yang Yu and Ruoyu Sun and Zhi-Quan Luo},
      year={2024},
      eprint={2310.10505},
      archivePrefix={arXiv},
      primaryClass={cs.LG},
      url={https://arxiv.org/abs/2310.10505}, 
}

@inproceedings{AHMADIAN2024,
  title     = {Back to Basics: Revisiting {REINFORCE}-Style Optimization for Learning from Human Feedback in {LLM}s},
  author    = {Ahmadian, Arash and Cremer, Chris and Gall{\'e}, Matthias and Fadaee, Marzieh and Kreutzer, Julia and Pietquin, Olivier and {\"U}st{\"u}n, Ahmet and Hooker, Sara},
  booktitle = {Proceedings of the 62nd Annual Meeting of the Association for Computational Linguistics (Volume 1: Long Papers)},
  pages     = {12248--12267},
  year      = {2024},
  doi       = {10.18653/v1/2024.acl-long.662},
  url       = {https://aclanthology.org/2024.acl-long.662/}
}

@inproceedings{GAO2023,
  title     = {Scaling Laws for Reward Model Overoptimization},
  author    = {Gao, Leo and Schulman, John and Hilton, Jacob},
  booktitle = {Proceedings of the 40th International Conference on Machine Learning},
  series    = {Proceedings of Machine Learning Research},
  volume    = {202},
  pages     = {10835--10866},
  year      = {2023},
  publisher = {PMLR},
  url       = {https://proceedings.mlr.press/v202/gao23h.html}
}

@article{TYAGI2024,
  title         = {Making Sigmoid-{MSE} Great Again: Output Reset Challenges Softmax Cross-Entropy in Neural Network Classification},
  author        = {Tyagi, Kanishka and Rane, Chinmay and Vaidya, Ketaki and Challgundla, Jeshwanth and Auddy, Soumitro Swapan and Manry, Michael},
  journal       = {arXiv preprint arXiv:2411.11213},
  year          = {2024},
  eprint        = {2411.11213},
  archivePrefix = {arXiv},
  primaryClass  = {cs.LG},
  url           = {https://arxiv.org/abs/2411.11213}
}

@inproceedings{HU2022,
  title     = {{LoRA}: Low-Rank Adaptation of Large Language Models},
  author    = {Hu, Edward J. and Shen, Yelong and Wallis, Phillip and Allen-Zhu, Zeyuan and Li, Yuanzhi and Wang, Shean and Wang, Lu and Chen, Weizhu},
  booktitle = {International Conference on Learning Representations},
  year      = {2022},
  url       = {https://openreview.net/forum?id=nZeVKeeFYf9}
}

@inproceedings{DETTMERS2023,
  title     = {{QLoRA}: Efficient Finetuning of Quantized {LLM}s},
  author    = {Dettmers, Tim and Pagnoni, Artidoro and Holtzman, Ari and Zettlemoyer, Luke},
  booktitle = {Advances in Neural Information Processing Systems},
  volume    = {36},
  year      = {2023},
  url       = {https://proceedings.neurips.cc/paper_files/paper/2023/hash/1feb87871436031bdc0f2beaa62a049b-Abstract.html}
}

@misc{MANGRULKAR2022,
  title        = {{PEFT}: State-of-the-Art Parameter-Efficient Fine-Tuning Methods},
  author       = {Mangrulkar, Sourab and Gugger, Sylvain and Debut, Lysandre and Belkada, Younes and Paul, Sayak},
  year         = {2022},
  howpublished = {Hugging Face software library},
  url          = {https://github.com/huggingface/peft}
}

@inproceedings{HOLTZMAN2020,
  title     = {The Curious Case of Neural Text Degeneration},
  author    = {Holtzman, Ari and Buys, Jan and Du, Li and Forbes, Maxwell and Choi, Yejin},
  booktitle = {International Conference on Learning Representations},
  year      = {2020},
  url       = {https://openreview.net/forum?id=rygGQyrFvH}
}

% ============================================================

\appendix

\section{Experimental Scope Relative to a Larger-Scale RLHF Pipeline}
\label{app:pipeline}

Table~\ref{tab:pipeline} places the present implementation in the
context of a more fully separated, larger-scale RLHF evaluation
pipeline. The comparison clarifies the scope of the evidence rather
than defining a single ``ideal'' implementation.

\begingroup
\small
\setlength{\tabcolsep}{3pt}
\renewcommand{\arraystretch}{1.10}
\setlength{\LTleft}{0pt}
\setlength{\LTright}{0pt}
\begin{longtable}{@{}
  >{\raggedright\arraybackslash}p{0.13\textwidth}
  >{\raggedright\arraybackslash}p{0.26\textwidth}
  >{\raggedright\arraybackslash}p{0.27\textwidth}
  >{\raggedright\arraybackslash}p{0.27\textwidth}@{}}
\caption{Comparison between common larger-scale RLHF practice and the
controlled implementation used in this study.}
\label{tab:pipeline}\\
\toprule
\textbf{Stage} &
\textbf{Common larger-scale practice} &
\textbf{Current study} &
\textbf{Implication} \\
\midrule
\endfirsthead

\multicolumn{4}{l}{\small\itshape Table~\thetable\ continued}\\
\toprule
\textbf{Stage} &
\textbf{Common larger-scale practice} &
\textbf{Current study} &
\textbf{Implication} \\
\midrule
\endhead

\midrule
\multicolumn{4}{r}{\small\itshape Continued on next page}\\
\endfoot

\bottomrule
\endlastfoot

Supervised fine-tuning &
Task-specific SFT checkpoint trained on a substantial instruction set. &
\texttt{Llama-3.2-1B-Instruct} used directly, without an additional
study-specific SFT stage. &
The study does not test whether task-specific SFT changes the relative
behavior of the policy objectives. \\
\addlinespace[2pt]
Preference-data separation &
Disjoint reward-model train/validation/test splits, with separate RL and
final-evaluation prompts. &
Reward model trained on 8,000 \texttt{hh-rlhf} pairs; RL prompts drawn
from the same source dataset. Prompt-level overlap is not documented. &
Possible overlap may inflate training-time reward-model scores and limits
generalization claims; shared conditions do not guarantee that bias
cancels across methods. \\
\addlinespace[2pt]
RL prompt coverage &
Large, diverse RL prompt set with an independent held-out evaluation set. &
Human turns from \texttt{hh-rlhf} used for 500 optimization steps; no
independent held-out human-preference evaluation. &
The results characterize this training setup and prompt distribution,
not broad post-training performance. \\
\addlinespace[2pt]
Policy adaptation &
Full-model tuning or a parameter-efficient configuration selected for
the target model and compute budget. &
LoRA with rank $r=16$ and approximately four million trainable policy
parameters. &
The restricted trainable subspace enables single-GPU experiments but may
change the dynamics relative to full-model or higher-rank tuning. \\
\addlinespace[2pt]
Reward model &
Reward model selected on held-out preference data and evaluated before
policy optimization. &
MLP scalar head on a frozen 4-bit Llama backbone; recorded pairwise
accuracy approximately 60--61\%. &
Scores may be noisy and are not calibrated measures of human utility;
checkpoint selection and evaluation split must match the finalized record. \\
\addlinespace[2pt]
Group-relative configuration &
Group size and response length evaluated through controlled ablations. &
Group size $G=2$, 64-token response cap, and reward-model CPU offload
during generation and policy updates. &
The small group and shorter response cap constrain the group-relative
findings and confound direct cross-family comparisons. \\
\addlinespace[2pt]
Final evaluation &
Fixed checkpoints evaluated on held-out prompts using independent
preference judgments and, where appropriate, multiple evaluators. &
Final logged training-time score from the same frozen reward model,
reported over three seeds. &
The evidence supports matched within-family comparisons, not calibrated
utility, statistical significance, or held-out human-preference claims. \\
\end{longtable}
\endgroup

\section{Illustrative Preference-Pair Format}
\label{app:preference_example}

The following example illustrates the chosen--rejected response format
used to train the Bradley--Terry reward model. It is included only to
clarify the data structure and is not used as evidence about reward-model
quality.

{\fontfamily{pcr}\selectfont
\begin{Verbatim}
chosen:
  Human: I want to learn how to cook.
         Can you give me some basic tips?
  Assistant: Sure! Start with simple recipes.
  Master knife skills early. Control your heat.
  Taste as you go and season at each stage.

rejected:
  Human: I want to learn how to cook.
         Can you give me some basic tips?
  Assistant: Cooking is a great skill.
  Try different recipes and see what you like.
  Practice makes perfect so just keep trying.
\end{Verbatim}
}

\section{Reference Implementation}
\label{app:code}

The listing below makes three implementation choices explicit. First,
zero-advantage tokens follow the detached current log-ratio and therefore
contribute zero direct OR residual. Second, the diagnostic returned by
\texttt{output\_reset} is the squared target magnitude, not the squared
prediction residual. Third, group-relative advantages use a standard
deviation lower-clamped at $10^{-4}$, and zero-variance groups are
excluded before policy-loss aggregation. The standard-deviation
correction is written explicitly so that the implementation does not
depend on a library default.

\begin{Verbatim}[fontsize=\small]
def output_reset(log_ratio, advantages, alpha=0.2):
    """
    Construct detached OR targets.

    advantage > 0: target +alpha until favorable overshoot
    advantage < 0: target -alpha until favorable overshoot
    advantage = 0: target current log-ratio, giving zero residual
    """
    y = log_ratio.detach()

    positive = advantages > 0
    negative = advantages < 0

    target = y.clone()
    target = torch.where(
        positive, torch.full_like(y, +alpha), target
    )
    target = torch.where(
        negative, torch.full_like(y, -alpha), target
    )

    # Favorable overshoot follows the detached current output.
    target = torch.where(positive & (y > +alpha), y, target)
    target = torch.where(negative & (y < -alpha), y, target)

    # Diagnostic only: squared target magnitude, not OR residual.
    target_magnitude_sq = target.square().mean(dim=0)
    return target, target_magnitude_sq


def group_relative_advantages(rm_grouped, eps=1e-4):
    """
    rm_grouped has shape [num_prompts, group_size].
    Zero-variance groups are excluded from returned advantages.
    """
    group_mean = rm_grouped.mean(dim=1, keepdim=True)

    # Explicit Bessel correction; equivalent to unbiased=True.
    raw_group_std = rm_grouped.std(
        dim=1, keepdim=True, correction=1
    )
    valid_group = raw_group_std.squeeze(1) > 0

    denom = raw_group_std.clamp(min=eps)
    advantages = (rm_grouped - group_mean) / denom

    return advantages[valid_group], valid_group
\end{Verbatim}

\end{document}